\theoremstyle{definition}
\setlist[enumerate]{leftmargin=0.5cm,topsep=0pt,itemsep=-2pt}
\setlist[itemize]{leftmargin=0.5cm,topsep=0pt,itemsep=-2pt}
\newcommand{\probspace}{\mathscr{P}}
\newcommand{\signedspace}{\mathscr{M}}
\icmltitlerunning{Off-policy Distributional Q($\lambda$): Distributional RL without Importance Sampling}
\begin{document}

\twocolumn[
\icmltitle{Off-policy Distributional Q($\lambda$): \\ Distributional RL without Importance Sampling}



\icmlsetsymbol{equal}{*}

\begin{icmlauthorlist}
\icmlauthor{Yunhao Tang}{dm}
\icmlauthor{Mark Rowland}{dm}
\icmlauthor{R\'emi Munos}{dm}
\icmlauthor{Bernardo \'Avila Pires}{dm}
\icmlauthor{Will Dabney}{dm}
\end{icmlauthorlist}

\icmlaffiliation{dm}{Google DeepMind}

\icmlcorrespondingauthor{Yunhao Tang}{robintyh@google.com}

\icmlkeywords{Machine Learning, ICML}

\vskip 0.3in
]



\printAffiliationsAndNotice{\icmlEqualContribution} 

\begin{abstract}
    We introduce off-policy distributional Q($\lambda$), a new addition to the family of off-policy distributional evaluation algorithms.  Off-policy distributional Q($\lambda$) does not apply importance sampling for off-policy learning, which introduces intriguing interactions with signed measures. Such unique properties distributional Q($\lambda$) from other existing alternatives such as distributional Retrace. 
     We characterize the algorithmic properties of distributional Q($\lambda$) and validate theoretical insights with tabular experiments. We show how distributional Q($\lambda$)-C51, a combination of Q($\lambda$) with the C51 agent, exhibits promising results on deep RL benchmarks.
\end{abstract}

\section{Introduction}

Random returns $\sum_{t=0}^\infty \gamma^t R_t$ are of fundamental importance to reinforcement learning (RL). While value-based RL focuses on learning the expectation of random returns $\mathbb{E}\left[\sum_{t=0}^\infty \gamma^t R_t\right]$ \citep{sutton1998}, distributional RL has demonstrated benefits to approximate the full distribution of the random return \citep{bellemare2017distributional}.

There is a continuous spectrum of algorithms for learning return distributions of a target policy. At one end of the spectrum is Monte-Carlo simulation, where one generates the full path of returns $(R_t)_{t=0}^\infty$ along a trajectory, building a direct estimate of the random return \citep{bdr2022}. At another end of the spectrum lies one-step temporal difference (TD) algorithms, where one approximate the random return by bootstrapping from the next state distribution \citep{bellemare2017distributional}. The bias and variance trade-off between these two extreme cases are akin to their counterparts in the case of value-based RL \citep{sutton1988learning}.

By balancing the bias-variance trade-off, the best performing algorithm is usually found by interpolating the two extremes. Combining full Monte-Carlo simulation and one-step distributional TD learning, one can obtain distributional Retrace, a  family of multi-step distributional learning algorithms \citep{tang2022nature}. For on-policy case where the data generation policy is the same as the target policy, distributional Retrace recovers a distributional equivalent of TD($\lambda$) \citep{sutton1988learning,nam2021gmac}. For the off-policy case, distributional Retrace can also approximate the target distribution efficiently, by adjusting for the discrepancy between the data collection policy and target policy using importance sampling. Prior work has established the efficiency of such multi-step distributional RL algorithms in large-scale practices, which have enabled significant improvements sin agent performance \citep{gruslys2017reactor,tang2022nature}.

Importance sampling is  fundamental to off-policy distributional RL, and to off-policy RL in general \citep{precup2001off,munos2016safe,espeholt2018impala}. By applying a careful reweighting with the probability ratios, it allows for learning from an off-policy trajectory \emph{as if} it were generated on-policy. Importance sampling has a number of critical limitations: it often introduces high variance; it is not applicable when the 
probability of the data collection policy is not available, which is the case for many applications.

In this work, we introduce off-policy distributional Q($\lambda$), a multi-step distributional RL algorithm \emph{without} the need for importance sampling. Off-policy distributional Q($\lambda$) draws inspirations from value-based off-policy Q($\lambda$) \citep{harutyunyan2016q} and adopts a single trace coefficient $\lambda\in[0,1]$ to mediate various properties of the algorithm, such as the bias and variance trade-off. Without importance sampling, distributional Q($\lambda$) is fundamentally different from distributional Retrace. Below, we highlight a few intriguing and important properties of off-policy distributional Q($\lambda$).

\begin{figure*}[t]
    \centering
    \includegraphics[keepaspectratio,width=.95\textwidth]{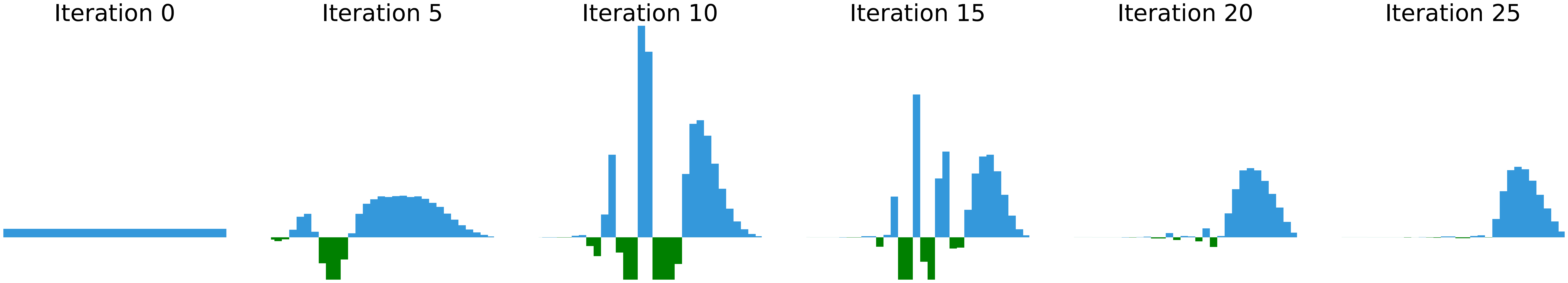}
    \caption{An illustration of the signed measure properties specific to the off-policy Q($\lambda$) operator. The blue and green bars represent the positive and negative probability masses of unit mass signed measures. We visualize the iterate $\eta_{k+1}=\mathcal{A}_\lambda^{\pi,\mu}\eta_k$ for a fixed state-action pair over time on a tabular MDP. The iterate starts as a distribution (an element in $\mathcal{P}(\mathbb{R})$, transitions into a signed measure with unit mass (an element in $\mathcal{M}_1(\mathbb{R})$, and eventually converge to the target return distribution $\eta^\pi$, which is itself a distribution. Any prior distributional RL policy evaluation operators will not exhibit such intriguing behavior, as their iterates are always distributions.}
    \label{fig:iterations}
\end{figure*}

\paragraph{Contraction and fixed point.} By design, off-policy distributional Q($\lambda$) operator has the target return distribution as a fixed point. The operator is also contractive when the target and the data collection policy is close enough, i.e., when the data is not too off-policy; we provide precise characterizations in Section~\ref{sec:qlambda}. As a result when these conditions are met, dynamic programming based or sample based  distributional Q($\lambda$) will converge to the target distribution. When on-policy, distributional Q($\lambda$) reduces to the distributional equivalent of value-based TD($\lambda$) or Q($\lambda$).

\paragraph{Signed measures and representations.} A distinguishing feature of off-policy distributional Q($\lambda$) is that applications of the operator produces signed measures. This contrasts with prior operators such as distributional Retrace \citep{tang2022nature} or one-step Bellman operator \citep{bellemare2017distributional}, where the iterates are naturally confined to be distributions. Intriguingly, this also implies that a convergent algorithm would require representing signed measures. Intuitively, this feels like an unnecessary burden since the fixed point itself is just a distribution. We note that this is the result of unique interaction between distributional learning and off-policy learning without importance sampling. See Figure~\ref{fig:iterations} for an illustration of how the signed measure iterates evolve under the operator: it starts as a distribution, evolves into signed measures during intermediary iterations, and finally returns back to a distribution.

To derive a practical algorithm based on the operator, we introduce an extension of the categorical distributional RL algorithm \citep{rowland2018analysis} to approximations the signed measure iterates produced by distributional Q($\lambda$) (Section~\ref{sec:categorical}).

\paragraph{Trust region interpretation and deep RL.} The contraction property of off-policy Q($\lambda$) naturally introduces a form of trust region constraint between target and data collection policy, from which we derive a heuristic to adapt the target policy for optimal control. This new implementation, in combination with C51 \citep{bellemare2017distributional}, improves over both baseline off-policy Q($\lambda$) and distributional Retrace over Atari-57 benchmarks.

\section{Background}
\label{sec:background}

Consider a Markov decision process (MDP) represented as the tuple $\left(\mathcal{X},\mathcal{A},P_R,P,\gamma\right)$ where $\mathcal{X}$ is the state space, $\mathcal{A}$ the action space, $P_R:\mathcal{X}\times\mathcal{A}\rightarrow \probspace(\mathscr{R})$ the reward kernel (with $\mathscr{R}$ a finite set of possible rewards), $P:\mathcal{X}\times\mathcal{A}\rightarrow\probspace(\mathcal{X})$ the transition kernel and $\gamma\in [0,1)$ the discount factor. In general, we use $\probspace(A)$ denote a distribution over set $A$.
We assume the reward to take a finite set of values mainly because it is notationally simpler to present results; it is straightforward to extend our results to the general case. We also assume the rewards are bounded $R_t\in[R_\text{min},R_\text{max}]$.

Throughout, we let $\pi:\mathcal{X}\rightarrow\probspace(\mathcal{A})$ be a fixed policy and use $(X_t,A_t,R_t)_{t=0}^\infty\sim\pi$ to denote a random trajectory sampled from $\pi$, such that $A_t\sim \pi(\cdot|X_t),R_t\sim P_R(\cdot|X_t,A_t),X_{t+1}\sim P(\cdot|X_t,A_t)$. Define $G^\pi(x,a)\coloneqq\sum_{t=0}^\infty \gamma^t R_t$ as the random return, obtained by following $\pi$ starting from $(x,a)$. The Q-function $Q^\pi(x,a)\coloneqq\mathbb{E}[G^\pi(x,a)]$ is defined as the expected return under policy $\pi$. For convenience, we also adopt the vector notation $Q\in\mathbb{R}^{\mathcal{X}\times\mathcal{A}}$.
Define the one-step value-based Bellman operator $T^\pi:\mathbb{R}^{\mathcal{X\times\mathcal{A}}}\rightarrow\mathbb{R}^{\mathcal{X\times\mathcal{A}}}$ such that $T^\pi Q(x,a) \coloneqq \mathbb{E}[R_0+\gamma Q\left(X_1,A_1^\pi\right)|X_0=x,A_0=a]$ where $Q(X_t,A_t^\pi)\coloneqq\sum_a \pi(a|X_t)Q(X_t,a)$.  The Q-function $Q^\pi$ satisfies  $Q^\pi=T^\pi Q^\pi$ and is also the unique fixed point of $T^\pi$.

\begin{table*}
\centering
    \caption{A comparison between different distributional operators for policy evaluation with target policy $\pi$. The distributional on-policy Q($\lambda$) is an extension of the value-based on-policy Q($\lambda$) operator to the distributional case; its details can be found in Appendix~\ref{appendix:alternative}. All operators preserve the space of probability distribution vector $\probspace(\mathbb{R})^{\mathcal{X}\times\mathcal{A}}$ except off-policy distributional Q($\lambda$). Off-policy distributional Q($\lambda$) is contractive when $\pi$ and $\mu$ are close enough (Lemma~\ref{lemma:qlambdacontraction}) and has $\eta^\pi$ as the unique fixed point.
    \newline}
\begin{small}
\begin{sc}
 \begin{tabular}{l|c|c|c}\toprule[1.5pt]
 Distributional operators & Closed for which space  & Contraction rate under $\bar{\ell}_p$ & Fixed point
\\\midrule
Dist. Bellman operator $\mathcal{T}^\pi$ & \bf $\probspace(\mathbb{R})^{\mathcal{X}\times\mathcal{A}}$  & $\gamma^{1/p}$ &  \bf $\eta^\pi$  \\
 Dist. on-policy Q($\lambda$) $\mathcal{T}_\lambda^\pi$ & \bf $\probspace(\mathbb{R})^{\mathcal{X}\times\mathcal{A}}$ & $\left(\frac{\gamma(1-\lambda)}{1-\lambda\gamma}\right)^{1/p}$ &  \bf $\eta^\pi$  \\
 Dist. Retrace  $\mathcal{R}^{\pi,\mu}$ & \bf $\probspace(\mathbb{R})^{\mathcal{X}\times\mathcal{A}}$ & $[0,\gamma^{1/p}]$ &  \bf $\eta^\pi$  \\
 Dist. off-policy Q($\lambda$)  $\mathcal{A}_\lambda^{\pi,\mu}$ & \bf $\signedspace_1(\mathbb{R})^{\mathcal{X}\times\mathcal{A}}$   & $\beta_p$ in Lemma~\ref{lemma:qlambdacontraction} &  $\eta^\pi$ if contractive \\
\bottomrule
\end{tabular}
\end{sc}
\end{small}
\vskip -0.1in
\label{table:dist-ops}
\end{table*}

\subsection{Multi-step value-based learning and off-policy Q($\lambda$)}

In value-based learning, the fixed point $Q^\pi$ can be obtained by repeatedly applying the Bellman operator $T^\pi$: $Q_{k+1}=T^\pi Q_k$ such that the sequence of iterate $Q_k$ converges to $Q^\pi$ at a rate of $\gamma^k$. To accelerate the convergence, Q($\lambda$) proposes the geometrically weighted average \citep{sutton1998} across multi-step bootstrapped targets. Define  $\delta_t^\pi\coloneqq R_t+\gamma Q\left(X_{t+1},A_{t+1}^\pi\right) - Q(X_t,A_t)$ as the value-based TD error, the Q($\lambda$) back-up target is
\begin{align*}
    T_\lambda^\pi Q(x,a) = Q(x,a) + \mathbb{E}_{\pi}\left[ \sum_{t=0}^\infty \lambda^t\gamma^t\delta_t^\pi\right].
\end{align*}
The Q($\lambda$) operator $T_\lambda^\pi$ is $\frac{\gamma(1-\lambda)}{1-\lambda\gamma}$-contractive and improves upon the one-step Bellman operator. The Q($\lambda$) operator $T_\lambda^\pi$ is on-policy, as the above expectation is taken under the target policy $\pi$. In off-policy learning,  the data is generated under a behavior policy $\mu$ (i.e., the data collection policy), which generally differs from the target policy $\pi$. As a standard assumption, we require $\text{supp}(\pi(\cdot|x))\subseteq\text{supp}(\mu(\cdot|x)),\forall x\in\mathcal{X}$. Off-policy Q($\lambda$) is a straightforward extension of Q($\lambda$) to the off-policy case \citep{harutyunyan2016q}, whose back-up target is now 
\begin{align*}
    A_\lambda^{\pi,\mu} Q(x,a) = Q(x,a) + \mathbb{E}_{\mu}\left[ \sum_{t=0}^\infty \lambda^t\gamma^t\delta_t^\pi\right].
\end{align*}
By construction, the operator $A_\lambda^{\pi,\mu}$ has $Q^\pi$ as a fixed point. Unlike importance sampling (IS) based methods such as Retrace \citep{precup2001off, munos2016safe}, off-policy Q($\lambda$) does not apply IS for off-policy corrections. As a result, $A^{\pi, \mu}_\lambda$ is generally only a contraction under certain conditions on $\pi$, $\mu$, and $\lambda$ \citep{harutyunyan2016q}

\subsection{Distributional reinforcement learning}

In general, the return $G^\pi(x,a)$ is a random variable and we define its distribution as $\eta^\pi(x,a)\coloneqq \text{Law}_\pi\left(G^\pi(x,a)\right)$. 
The return distribution satisfies the distributional Bellman equation \citep{morimura2010nonparametric,morimura2012parametric,bellemare2017distributional,rowland2018analysis,bdr2022},
\begin{align}
     \eta^\pi(x,a) = \mathbb{E}_\pi\left[ \left(\textrm{b}_{R_0,\gamma}\right)_{\#} \eta^\pi\left(X_1,A_1^\pi\right)\; \middle| \; X_0=x,A_0=a\right] \, , \label{eq:dist-bellman}
\end{align}
where $(\textrm{b}_{r,\gamma})_\#:\probspace(\mathbb{R})\rightarrow \probspace(\mathbb{R})$ is the pushforward operation defined through the function $\textrm{b}_{r,\gamma}(z)=r+\gamma z$ \citep{rowland2018analysis}. For convenience, we adopt the notation $\eta^\pi(X_t,A_t^\pi)\coloneqq\sum_a \pi(a|X_t)\eta^\pi(X_t,a)$.
Throughout the paper, we focus on the space of distributions with \emph{bounded} support.

Let $\eta\in\probspace(\mathbb{R})^{\mathcal{X}\times\mathcal{A}}$ be any return distribution function, we define the \emph{distributional Bellman operator} $\mathcal{T}^\pi: \probspace_\infty(\mathbb{R})^{\mathcal{X}\times\mathcal{A}}\rightarrow\probspace_\infty(\mathbb{R})^{\mathcal{X}\times\mathcal{A}}$ as follows \citep{rowland2018analysis,bdr2022},
\begin{align}
    \mathcal{T}^\pi\eta(x,a)\coloneqq \mathbb{E}\left[(\textrm{b}_{R_0,\gamma})_\# \eta(X_1,A_1^\pi)\; \middle| \; X_0=x,A_0=a\right]\, .\label{eq:dist-bellman-op}
\end{align}
Let $\eta^\pi$ be the collection of return distributions under $\pi$; 
the distributional Bellman equation can then be rewritten as
$
    \eta^\pi = \mathcal{T}^\pi \eta^\pi
$. The distributional Bellman operator $\mathcal{T}^\pi$ is $\gamma^{1/p}$-contractive under the $\ell_p$ distance \citep{rowland2018analysis,bdr2022} for any $p\geq 1$, so that $\eta^\pi$ is its unique fixed point.

As a remark for technically minded readers, we note that since $\ell_p$ is initially defined between CDFs of the distributions, it is naturally extended between signed measures too. Meanwhile, it is more challenging to extend Wasserstein distance, another commonly used metric in distributional RL \citep{bdr2022}, to signed measures. Hence all the results in this work are stated in terms of the $\ell_p$ distance.

\subsection{Multi-step distributional RL}

The multi-step distributional bootstrapping bears qualitative differences from value-based multi-step learning \citep{gruslys2017reactor,tang2022nature}. In off-policy learning, let $\rho_t\coloneqq \pi(A_t|X_t)/\mu(A_t|X_t)$ be the step-wise importance sampling (IS) ratio at time step $t$. Let $c_t\in [0,\rho_t]$ be a time-dependent trace coefficient. We denote $c_{1:t}=c_1\cdots c_t$ and define $c_{1:0}=1$ by convention. \citet{tang2022nature} shows that 
distributional Retrace operator $\mathcal{R}^{\pi,\mu}:\probspace(\mathbb{R})^{\mathcal{X}\times\mathcal{A}}\rightarrow \probspace(\mathbb{R})^{\mathcal{X}\times\mathcal{A}}$ is
\begin{align}
    \mathcal{R}^{\pi,\mu}\eta(x,a) \coloneqq \eta(x,a) + \mathbb{E}_{\mu}\left[ \sum_{t=0}^\infty c_{1:t} \cdot \left(\textrm{b}_{G_{0:t-1},\gamma^{t}}\right)_{\#} \Delta_t^\pi  \right],\label{eq:dist-retrace-operator}
\end{align}
where $\Delta_t^\pi=\mathcal{T}^\pi\eta(X_t,A_t)-\eta(X_t,A_t)$ is the distributional one-step TD error. Distributional Retrace has $\eta^\pi$ as the unique fixed point and in general contracts faster than the one-step distributional Bellman operator $\mathcal{T}^\pi$. 

\section{Off-policy distributional Q($\lambda$)}
\label{sec:qlambda}

We now discuss a number of essential properties of off-policy distributional Q($\lambda$) operator: its origin of derivation, its fixed point and contraction property, and its unique interaction with signed measures.

There are a few equivalent ways to arrive at the operator: to better draw connections to existing multi-step off-policy operators, we start with the mathematical form of the distributional Retrace operator in Eqn~\eqref{eq:dist-retrace-operator}, and define distributional Q($\lambda$) operator with the trace coefficient $c_t=\lambda\in[0,1]$:
\begin{align}
    \mathcal{A}_\lambda^{\pi,\mu}\eta(x,a) \coloneqq \eta(x,a) + \mathbb{E}_{\mu}\left[ \sum_{t=0}^\infty \lambda^t \cdot \left(\textrm{b}_{G_{0:t-1},\gamma^{t}}\right)_{\#} \Delta_t^\pi  \right].\label{eq:dist-qlambda-operator}
\end{align}
The off-policy distributional Q($\lambda$) is \emph{not} a special case of distributional Retrace, despite their apparent similarities. Critically, distributional Retrace requires the trace coefficient $c_t\in[0,\rho_t]$ to ensure conservative trace cutting \citep{munos2016safe}, whereas setting $c_t=\lambda$ can violate such a condition. A detailed derivation of off-policy distributional Q($\lambda$) can extend from the distributional on-policy Q($\lambda$) operator \citep{nam2021gmac}, similar to how one arrives at value-based off-policy Q($\lambda$) from on-policy Q($\lambda$) in Section~\ref{sec:background}. We detail such a derivation in Appendix~\ref{appendix:alternative}.

Before  we will elaborate on the fundamental differences between off-policy distributional Q($\lambda$) and previous distributional operators: interaction with signed measures.

\subsection{Off-policy dist. Q($\lambda$) targets are signed measures}

To facilitate the discussion, we introduce the notation $\signedspace_1(\mathbb{R})$ of the space of signed measures with total mass of $1$.
This particular space of signed measure is a natural generalization of (and a superset to) the space of probability measures by allowing for negative probability mass in certain locations of the distribution, while still requiring a unit total mass. See \citet{bdr2022} for a more formal definition of the signed measure space.

While previous distributional operators such as the one-step operator $\mathcal{T}^\pi$ and the Retrace operator $\mathcal{R}^{\pi,\mu}$ map within the space of distributions, this is not the case for the off-policy distributional Q($\lambda$) operator. Concretely, it is possible to find a vector of distribution $\eta\in\probspace(\mathbb{R})^{\mathcal{X}\times\mathcal{A}}$ such that
$\mathcal{A}_\lambda^{\pi,\mu}\eta(x,a)$ is not a distribution (but a signed measure in general). In other words, the space of probability distribution is not closed under the operator $ \mathcal{A}_\lambda^{\pi,\mu}$. Fortunately, the space of signed measure is closed.
\begin{restatable}{lemma}{lemmasigned}\label{lemma:signed} (\textbf{Closeness of the space of signed measures}) Given any $\eta\in\signedspace_1(\mathbb{R})^{\mathcal{X}\times\mathcal{A}}$, 
we have $\mathcal{A}_\lambda^{\pi,\mu}\eta\in\signedspace_1(\mathbb{R})^{\mathcal{X}\times\mathcal{A}}$.
\end{restatable}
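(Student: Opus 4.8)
The plan is to verify the two defining properties of $\signedspace_1(\mathbb{R})$ separately: (i) that $\mathcal{A}_\lambda^{\pi,\mu}\eta(x,a)$ is a well-defined signed measure (finite total variation, hence an element of $\signedspace(\mathbb{R})$), and (ii) that its total mass equals $1$. The key observation is that, because rewards lie in the bounded set $\mathscr{R}\subseteq[R_{\min},R_{\max}]$ and $\gamma<1$, every random return partial sum $G_{0:t-1}$ is uniformly bounded, so each pushforward $(\textrm{b}_{G_{0:t-1},\gamma^t})_\#\Delta_t^\pi$ is a signed measure supported in a fixed compact interval. The first thing I would do is record that $\mathcal{T}^\pi$ maps $\signedspace_1(\mathbb{R})^{\mathcal{X}\times\mathcal{A}}$ into itself (pushforward by $\textrm{b}_{R_0,\gamma}$ and taking $\pi$-averages both preserve total mass and the signed-measure structure, by linearity), so $\Delta_t^\pi = \mathcal{T}^\pi\eta(X_t,A_t)-\eta(X_t,A_t)$ is a signed measure with total mass $1-1=0$ for every realization of the trajectory.

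Next I would handle convergence of the infinite sum. Since $\Delta_t^\pi$ is a difference of two unit-mass signed measures obtained from $\eta$, its total variation norm is bounded by a constant $C$ uniform in $t$ (using that $\eta$ has bounded support and finite TV norm, and that $\mathcal{T}^\pi$ does not increase these quantities in a controlled way); pushing forward by an affine map does not change the total mass and only rescales/translates support, so $\lVert (\textrm{b}_{G_{0:t-1},\gamma^t})_\#\Delta_t^\pi\rVert_{\mathrm{TV}}\le C$ as well. Hence $\sum_{t=0}^\infty \lambda^t \lVert(\textrm{b}_{G_{0:t-1},\gamma^t})_\#\Delta_t^\pi\rVert_{\mathrm{TV}} \le C/(1-\lambda)$ (with the convention that for $\lambda=1$ one instead uses the discount factor $\gamma^t$ appearing in the pushforward scaling together with the bounded-support argument to get geometric decay), so the series converges absolutely in total variation and defines a genuine finite signed measure; taking the $\mu$-expectation preserves this. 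Adding $\eta(x,a)\in\signedspace_1(\mathbb{R})$ keeps us in $\signedspace(\mathbb{R})$.

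Finally, for the total mass: applying the total-mass functional $m(\cdot)$, which is linear and continuous with respect to the TV norm, we get $m(\mathcal{A}_\lambda^{\pi,\mu}\eta(x,a)) = m(\eta(x,a)) + \mathbb{E}_\mu\big[\sum_{t\ge0}\lambda^t\, m\big((\textrm{b}_{G_{0:t-1},\gamma^t})_\#\Delta_t^\pi\big)\big]$. Each pushforward preserves total mass, and $m(\Delta_t^\pi)=0$ as noted above, so every term in the sum vanishes and $m(\mathcal{A}_\lambda^{\pi,\mu}\eta(x,a)) = m(\eta(x,a)) = 1$. Combining (i) and (ii) gives $\mathcal{A}_\lambda^{\pi,\mu}\eta\in\signedspace_1(\mathbb{R})^{\mathcal{X}\times\mathcal{A}}$ coordinatewise.

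The main obstacle I anticipate is the convergence/interchange step: justifying that the infinite series and the expectation $\mathbb{E}_\mu$ may be handled term-by-term (Fubini/dominated convergence for vector-valued or TV-normed integrals) and that the relevant TV bounds on $\Delta_t^\pi$ are genuinely uniform in $t$ — this relies essentially on the bounded-support assumption stated in the background section, and is the place where one must be careful rather than purely formal. The total-mass computation itself is then routine linearity.
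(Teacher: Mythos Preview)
Your argument is correct and takes a genuinely different route from the paper. The paper does not work with the TD-error increments $\Delta_t^\pi$ directly; instead it invokes the decomposition from \citet{tang2022nature}, rewriting
\[
\mathcal{A}_\lambda^{\pi,\mu}\eta(x,a)=\sum_{t\ge 1}\sum_{y,b,r_{0:t-1}} w_{y,b,r_{0:t-1}}\,(\textrm{b}_{G_{0:t-1},\gamma^t})_\#\eta(y,b)
\]
as a (possibly signed) linear combination of unit-mass pushforwards, and then citing the telescoping identity $\sum w_{y,b,r_{0:t-1}}=1$ to conclude total mass~$1$. Your approach is more self-contained for this lemma: the observation that $m(\Delta_t^\pi)=0$ makes the mass computation a one-liner once linearity and continuity of $m(\cdot)$ are in hand, and you do not need the $w$-coefficient machinery at all. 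What the paper's decomposition buys is reusability---the same coefficients $w_{y,b,r_{0:t-1}}$ and their absolute sums reappear verbatim in the contraction proof (Lemma~\ref{lemma:qlambdacontraction}), so the paper is front-loading that bookkeeping. On the convergence issue you flag: your TV-norm bound $\sum_t\lambda^t C$ is fine for $\lambda<1$, but your parenthetical for $\lambda=1$ is not quite right, since pushforward by $\textrm{b}_{r,\gamma^t}$ rescales support but does \emph{not} shrink total variation; one would need to switch to an $\ell_p$-on-CDFs argument (where the rescaling does give a $\gamma^{t/p}$ factor) to close that case. The paper's proof is equally silent on this edge case.
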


Figure~\ref{fig:iterations} also illustrate the closeness property of the operator: all iterates are unit mass signed measures. 

\subsection{Fixed point and contraction property}

We now discuss the fixed point and contraction property of the distributional Q($\lambda$) operator. The technical approach is mostly motivated by the value-based case \citep{harutyunyan2016q}. First note that by construction, the off-policy distributional Q($\lambda$) operator has $\eta^\pi$ as one fixed point.
\begin{restatable}{lemma}{lemmaqlambdafixedpoint}\label{lemma:qlambdafixedpoint} (\textbf{Fixed point}) $\eta^\pi$ is a fixed point of $\mathcal{A}_\lambda^{\pi,\mu}$.
\end{restatable}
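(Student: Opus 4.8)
The plan is to exploit the defining property that $\eta^\pi$ is the fixed point of the one-step distributional Bellman operator $\mathcal{T}^\pi$, which makes every term in the correction sum defining $\mathcal{A}_\lambda^{\pi,\mu}$ vanish. Concretely, recall from Eqn~\eqref{eq:dist-bellman-op} (equivalently Eqn~\eqref{eq:dist-bellman}) that $\eta^\pi = \mathcal{T}^\pi\eta^\pi$, i.e. $\mathcal{T}^\pi\eta^\pi(x,a) = \eta^\pi(x,a)$ for \emph{every} $(x,a)\in\mathcal{X}\times\mathcal{A}$. In particular, along any trajectory $(X_t,A_t,R_t)_{t\geq 0}\sim\mu$ and for every $t\geq 0$, the distributional one-step TD error evaluated at $\eta=\eta^\pi$,
$\Delta_t^\pi = \mathcal{T}^\pi\eta^\pi(X_t,A_t) - \eta^\pi(X_t,A_t)$,
is the zero signed measure, deterministically and for every realization of the trajectory.

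Next I would observe that the pushforward map $(\textrm{b}_{r,\gamma})_\#$ sends the zero measure to the zero measure, since $\big((\textrm{b}_{r,\gamma})_\#\nu\big)(B) = \nu\big(\textrm{b}_{r,\gamma}^{-1}(B)\big)$ for any Borel set $B$; hence each summand $\lambda^t\,(\textrm{b}_{G_{0:t-1},\gamma^t})_\#\Delta_t^\pi$ in Eqn~\eqref{eq:dist-qlambda-operator} is the zero measure. The resulting series is therefore (trivially) convergent to the zero measure, and taking the expectation $\mathbb{E}_\mu[\cdot]$ leaves it unchanged. Substituting into Eqn~\eqref{eq:dist-qlambda-operator} gives $\mathcal{A}_\lambda^{\pi,\mu}\eta^\pi(x,a) = \eta^\pi(x,a) + 0 = \eta^\pi(x,a)$ for all $(x,a)$, which is exactly the claim, and by Lemma~\ref{lemma:signed} the output indeed lies in $\signedspace_1(\mathbb{R})^{\mathcal{X}\times\mathcal{A}}$ (in fact in $\probspace(\mathbb{R})^{\mathcal{X}\times\mathcal{A}}$).

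There is essentially no obstacle here; the only points needing (minimal) care are that $\mathcal{T}^\pi$ in the definition of $\Delta_t^\pi$ is the \emph{exact} operator rather than a sample estimate, so the error is identically zero once $\eta=\eta^\pi$, and that interchanging $\mathbb{E}_\mu$ with the infinite sum is unproblematic precisely because every term vanishes (the integrability/convergence concerns one would normally address for this family of operators are vacuous in this special case). Note the argument uses only $\eta^\pi = \mathcal{T}^\pi\eta^\pi$ and no contraction property whatsoever; whether $\eta^\pi$ is the \emph{unique} fixed point is a separate question, answered affirmatively only under the closeness conditions on $\pi,\mu,\lambda$ via Lemma~\ref{lemma:qlambdacontraction}.
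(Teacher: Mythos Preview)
Your proof is correct and follows essentially the same approach as the paper: both observe that when $\eta=\eta^\pi$ the distributional TD error $\Delta_t^\pi=\mathcal{T}^\pi\eta^\pi(X_t,A_t)-\eta^\pi(X_t,A_t)$ is the zero measure, so the correction sum in Eqn~\eqref{eq:dist-qlambda-operator} vanishes. If anything, your version is slightly more explicit in noting that $\Delta_t^\pi$ is identically zero for every realization (since $\mathcal{T}^\pi$ is the exact operator, $\Delta_t^\pi$ depends only on $(X_t,A_t)$), whereas the paper phrases it as $\mathbb{E}[\Delta_t^\pi\mid X_t,A_t]=0$; these are equivalent here, and the remaining steps are the same.
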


Hence, we can evaluate the target return distribution $\eta^\pi$ by an algorithm that converges to the fixed point of the operator. A sufficient condition for the approximate dynamic programming algorithms to converge is that the operator be contractive. We consider contraction under the the $\ell_p$ supremum distance, defined as
\begin{align*}
    \bar{\ell}_p(\eta_1,\eta_2) = \max_{x,a}\ell_p\left(\eta_1(x,a),\eta_2(x,a)\right),
\end{align*}
for any signed measure vectors $\eta_1,\eta_2\in\signedspace_1(\mathbb{R})^{\mathcal{X}\times\mathcal{A}}$. The contraction rate critically depends on the distance between $\pi$ and $\mu$, which we define as $\left\lVert \pi - \mu \right\rVert_1=\max_x \sum_a \left|\pi(a|x) - \mu(a|x)\right|$.

\begin{restatable}{lemma}{lemmaqlambdacontraction}\label{lemma:qlambdacontraction} (\textbf{Contraction}) Let $\epsilon\coloneqq \left\lVert \pi - \mu \right\rVert_1$, then for any $p\geq 1$ and signed measures $\forall\eta_1,\eta_2\in\signedspace_1(\mathbb{R})^{\mathcal{X}\times\mathcal{A}}$,
\begin{align*}
    \bar{\ell}_p\left(\mathcal{A}_\lambda^{\pi,\mu}\eta_1, \mathcal{A}_\lambda^{\pi,\mu}\eta_2\right) \leq \beta_p \bar{\ell}_p\left(\eta_1, \eta_2\right),
\end{align*}
where $\beta_p=\gamma^{1/p}\frac{1-\lambda+\lambda\epsilon}{(1-\lambda)^{(p-1)/p}(1-\lambda\gamma)^{1/p}}$ is the contraction rate under the supremum $\ell_p$ distance. 
\end{restatable}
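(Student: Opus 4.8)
The plan is to reduce everything to the linear part of $\mathcal{A}_\lambda^{\pi,\mu}$. Since every ingredient in \eqref{eq:dist-qlambda-operator} — the pushforwards $(\textrm{b}_{r,c})_\#$ and the expectations — is linear in the measure argument, $\mathcal{A}_\lambda^{\pi,\mu}$ extends to a linear operator on signed measure vectors, so it suffices to control $\mathcal{A}_\lambda^{\pi,\mu}\xi$ for $\xi \coloneqq \eta_1 - \eta_2$, a signed measure vector of total mass $0$. I will use that $\ell_p$ is induced by the $L^p$ (semi)norm on the primitives of signed measures — writing $\lVert \nu \rVert_p \coloneqq \ell_p(\nu, 0)$ for mass-$0$ $\nu$, it satisfies the triangle inequality, the scaling identity $\lVert (\textrm{b}_{r,c})_\# \nu \rVert_p = c^{1/p}\lVert \nu\rVert_p$ for $c>0$, and the convexity bound $\lVert \sum_i w_i \nu_i \rVert_p^p \le \sum_i w_i \lVert \nu_i \rVert_p^p$ whenever $w_i \ge 0$ and $\sum_i w_i = 1$ (Jensen applied pointwise to $t \mapsto |t|^p$, then integrated over the real line). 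I will take $\lambda \in [0,1)$ throughout; the case $\lambda = 1$ follows by continuity when $p=1$ and is vacuous for $p>1$, where $\beta_p = +\infty$.

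The first step is a telescoping rewriting of $\mathcal{A}_\lambda^{\pi,\mu}\xi(x,a)$. Using the composition identity $(\textrm{b}_{G_{0:t-1},\gamma^t})_\# \circ (\textrm{b}_{R_t,\gamma})_\# = (\textrm{b}_{G_{0:t},\gamma^{t+1}})_\#$, each summand $\lambda^t(\textrm{b}_{G_{0:t-1},\gamma^t})_\#\Delta_t^\pi[\xi]$ — with $\Delta_t^\pi[\xi] = \mathbb{E}[(\textrm{b}_{R_t,\gamma})_\#\xi(X_{t+1},A_{t+1}^\pi)\mid X_t,A_t] - \xi(X_t,A_t)$ — splits into a look-ahead piece and a local piece; after reindexing the look-ahead pieces by $s=t+1$, the $t=0$ local piece equals $\xi(x,a)$ (the pushforward being the identity there) and cancels the leading $\xi(x,a)$, leaving
\begin{align*}
  \mathcal{A}_\lambda^{\pi,\mu}\xi(x,a) = \mathbb{E}_\mu\!\left[\sum_{s=1}^\infty (\textrm{b}_{G_{0:s-1},\gamma^s})_\#\big(\lambda^{s-1}\xi(X_s,A_s^\pi) - \lambda^s\xi(X_s,A_s)\big)\right].
\end{align*}
Because $G_{0:s-1}$ and $X_s$ are measurable functions of $(X_j,A_j,R_j)_{j<s}$ and hence conditionally independent of $A_s$, I will average over $A_s \sim \mu(\cdot\mid X_s)$ inside, replacing the $s$-th summand by $\lambda^{s-1}(\textrm{b}_{G_{0:s-1},\gamma^s})_\#\zeta_s$ with $\zeta_s \coloneqq \sum_a(\pi(a\mid X_s) - \lambda\mu(a\mid X_s))\xi(X_s,a)$.

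Two estimates then finish the argument. First, decomposing $\pi - \lambda\mu = (1-\lambda)\pi + \lambda(\pi-\mu)$, the $(1-\lambda)\sum_a\pi(a\mid X_s)\xi(X_s,a)$ part is $(1-\lambda)$ times a probability mixture of the $\xi(X_s,a)$ and so has $\lVert\cdot\rVert_p \le (1-\lambda)\bar\ell_p(\eta_1,\eta_2)$, while splitting $\pi - \mu$ into positive and negative parts (each of total mass $\le \epsilon/2$) bounds the other part by $\lambda\epsilon\,\bar\ell_p(\eta_1,\eta_2)$; hence $\lVert\zeta_s\rVert_p \le (1-\lambda+\lambda\epsilon)\bar\ell_p(\eta_1,\eta_2)$ uniformly in $X_s$. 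Second, I factor out $\tfrac{1}{1-\lambda} = \sum_{s\ge1}\lambda^{s-1}$ so that $\{(1-\lambda)\lambda^{s-1}\}_{s\ge1}$ together with the $\mu$-trajectory law is a genuine probability mixture; the convexity bound, the scaling identity $\lVert(\textrm{b}_{\cdot,\gamma^s})_\#\zeta_s\rVert_p^p = \gamma^s\lVert\zeta_s\rVert_p^p$, the first estimate, and $\sum_{s\ge1}\lambda^{s-1}\gamma^s = \gamma/(1-\lambda\gamma)$ give
\begin{align*}
  \big\lVert\mathcal{A}_\lambda^{\pi,\mu}\xi(x,a)\big\rVert_p^p \le \frac{1-\lambda}{(1-\lambda)^p}\cdot\frac{\gamma}{1-\lambda\gamma}\,(1-\lambda+\lambda\epsilon)^p\,\bar\ell_p(\eta_1,\eta_2)^p,
\end{align*}
and taking $p$-th roots (with $1-\tfrac{1}{p} = \tfrac{p-1}{p}$) and maximising over $(x,a)$ yields exactly $\bar\ell_p(\mathcal{A}_\lambda^{\pi,\mu}\eta_1,\mathcal{A}_\lambda^{\pi,\mu}\eta_2) \le \beta_p\,\bar\ell_p(\eta_1,\eta_2)$.

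The step I expect to be the main obstacle is the handling of the geometric sum: bounding it term by term via the triangle inequality only gives the weaker rate $\gamma^{1/p}(1-\lambda+\lambda\epsilon)/(1-\lambda\gamma^{1/p})$, and recovering the stated $\beta_p$ requires renormalising the geometric weights to a probability vector before applying the $p$-th-power convexity of $\ell_p$ — the same mechanism that produces the $\gamma^{1/p}$ contraction of $\mathcal{T}^\pi$, but here interacting with the extra $\lambda$-weighting. A secondary subtlety is getting the conditional average over $A_s$ right in the telescoping step, since it is precisely that averaging that turns an $O(1)$ constant into the small quantity $\lVert\pi-\mu\rVert_1 = \epsilon$ governing the off-policy error.
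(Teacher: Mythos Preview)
Your proposal is correct and follows essentially the same approach as the paper: both proofs telescope the operator into a weighted combination of pushforwards indexed by time (the paper writes this as an explicit sum over $(t,y,b,r_{0:t-1})$ with weights $w_{y,b,r_{0:t-1}}=\mathbb{E}_\mu[\lambda^{t-1}(\pi(b\mid X_t)-\lambda\mu(b\mid X_t))\,\mathbb{I}[\cdot]]$, which is exactly your $\lambda^{s-1}\zeta_s$ disaggregated), use the decomposition $\pi-\lambda\mu=(1-\lambda)\pi+\lambda(\pi-\mu)$ to bound the coefficients by $\lambda^{t-1}(1-\lambda+\lambda\epsilon)$, and then apply the convexity of $\ell_p^p$ after normalising the total weight $\tfrac{1-\lambda+\lambda\epsilon}{1-\lambda}$ to a probability, picking up the extra $(1-\lambda)^{-(p-1)/p}$ factor. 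Your presentation is slightly cleaner in that you work directly with the difference $\xi=\eta_1-\eta_2$ via linearity and keep the trajectory expectation intact, whereas the paper enumerates discrete tuples and absorbs signs into auxiliary measures $\tilde\eta$; but the substance of the argument, including the key step you flagged as the main obstacle, is identical.
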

The contraction rate $\beta_p$ depends on $\left\lVert \pi - \mu \right\rVert_1$ which is a measure of off-policyness; and the value of $\gamma$ and $\lambda$. When $\pi,\mu$ are close enough, the off-policy distributional Q($\lambda$) operator is  contractive.

\begin{restatable}{corollary}{corollarycontraction}\label{corollary:contraction}  When $\left\lVert \pi - \mu \right\rVert_1<\frac{1-\gamma}{\lambda\gamma}$,  we have $\beta_1<1$ and the operator $\mathcal{A}_\lambda^{\pi,\mu}$ is contractive under the $L_1$ distance. This also implies that $\eta^\pi$ is the unique fixed point to $\mathcal{A}_\lambda^{\pi,\mu}$. 
\end{restatable}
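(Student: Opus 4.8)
The plan is to reduce the claim to a one-line algebraic inequality using the contraction rate already established in Lemma~\ref{lemma:qlambdacontraction}, and then to invoke the Banach fixed point theorem together with Lemma~\ref{lemma:qlambdafixedpoint}.

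First I would specialize Lemma~\ref{lemma:qlambdacontraction} to $p=1$. Since $(1-\lambda)^{(p-1)/p}=1$, $\gamma^{1/p}=\gamma$, and $(1-\lambda\gamma)^{1/p}=1-\lambda\gamma$ at $p=1$, the contraction rate is $\beta_1=\gamma\,\frac{1-\lambda+\lambda\epsilon}{1-\lambda\gamma}$ with $\epsilon=\lVert\pi-\mu\rVert_1$. Using $1-\lambda\gamma>0$, I would then chase the chain of equivalences $\beta_1<1 \iff \gamma(1-\lambda+\lambda\epsilon)<1-\lambda\gamma \iff \gamma+\lambda\gamma\epsilon<1 \iff \lambda\gamma\,\epsilon<1-\gamma$, which is exactly the hypothesis $\lVert\pi-\mu\rVert_1<\frac{1-\gamma}{\lambda\gamma}$ (reading the right-hand side as $+\infty$ when $\lambda\gamma=0$, in which case $\beta_1=\gamma<1$ holds trivially). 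Combined with Lemma~\ref{lemma:qlambdacontraction}, this shows $\mathcal{A}_\lambda^{\pi,\mu}$ is a $\beta_1$-contraction with $\beta_1<1$ under $\bar{\ell}_1$ on $\signedspace_1(\mathbb{R})^{\mathcal{X}\times\mathcal{A}}$.

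For uniqueness of the fixed point, I would apply the Banach fixed point theorem. This needs a complete metric space, so I would restrict attention to the subset of $\signedspace_1(\mathbb{R})^{\mathcal{X}\times\mathcal{A}}$ whose elements are supported on a fixed compact interval determined by $R_\text{min}$, $R_\text{max}$, and $\gamma$ (the range of achievable returns). This subset is preserved by $\mathcal{A}_\lambda^{\pi,\mu}$, since every quantity appearing in Eqn~\eqref{eq:dist-qlambda-operator} is bounded within that range, and Lemma~\ref{lemma:signed} already guarantees the unit-mass constraint; it is complete under $\bar{\ell}_1$ by the same argument used for probability measures in the distributional RL literature. Banach's theorem then yields a unique fixed point, and since $\eta^\pi$ is a fixed point by Lemma~\ref{lemma:qlambdafixedpoint}, it must be the unique one.

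The main obstacle is not the algebra, which is routine, but making the fixed-point argument rigorous over signed measures rather than probability distributions: one must confirm both that $\mathcal{A}_\lambda^{\pi,\mu}$ maps an appropriate complete space of bounded-support, unit-mass signed measures into itself (a short check that supports stay within the achievable-return interval, on top of Lemma~\ref{lemma:signed}) and that $\bar{\ell}_1$ restricted to that space is complete. Once these are in place, the contraction gives existence and uniqueness of the fixed point, and the identification with $\eta^\pi$ is immediate.
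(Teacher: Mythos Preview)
Your algebraic verification that $\beta_1<1$ is equivalent to $\lVert\pi-\mu\rVert_1<\frac{1-\gamma}{\lambda\gamma}$ is correct and is exactly what the paper does (the paper's proof just says ``by setting $\beta_1<1$ we obtain the condition'' without spelling out the chain of equivalences you give).

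Where you diverge is in the fixed-point step, and here you are making life harder than necessary. You invoke the Banach fixed point theorem, which forces you to worry about completeness of the space of bounded-support unit-mass signed measures under $\bar\ell_1$---and you correctly flag this as the ``main obstacle,'' since the standard completeness argument for probability measures (via monotone CDFs) does not transfer directly. But none of this is needed: Lemma~\ref{lemma:qlambdafixedpoint} already gives \emph{existence} of a fixed point, namely $\eta^\pi$, so you only need \emph{uniqueness}. That follows from the contraction inequality alone: if $\eta'$ were a second fixed point with bounded support, then $\bar\ell_1(\eta^\pi,\eta')=\bar\ell_1(\mathcal{A}_\lambda^{\pi,\mu}\eta^\pi,\mathcal{A}_\lambda^{\pi,\mu}\eta')\le\beta_1\,\bar\ell_1(\eta^\pi,\eta')$ with $\beta_1<1$, forcing $\bar\ell_1(\eta^\pi,\eta')=0$. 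This is the argument the paper is implicitly using when it writes ``since $\eta^\pi$ is a fixed point \ldots\ when the operator is contractive we also have $\eta^\pi$ as the unique fixed point.'' So your proposal is correct, but the detour through completeness is an unnecessary complication that you can simply drop.
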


A few remarks are in order. We call $\frac{1-\gamma}{\lambda\gamma}$ the contraction radius, the bound on the distance between $\pi$ and $\mu$ to ensure that the operator is contractive,
which also coincides with similar quantities in the value-based off-policy Q($\lambda$) \citep{harutyunyan2016q}. Inverting the radius condition, we derive a bound on the trace coefficient $\lambda$ for the operator to be contractive:  $\lambda<\frac{1-\gamma}{\gamma\left\lVert \pi -\mu\right\rVert_1}$. In other words, when $\pi$ and $\mu$ are far from each other, $\lambda$ can only take small value close to $0$ in order to ensure that the operator is contractive. In such cases,  operator cuts traces very quickly, and can only make use of bootstrapped values in the very near future. Meanwhile, in the limiting on-policy case when $\pi=\mu$, the radius condition is always satisfied and any value of $\lambda\in[0,1]$ makes the operator contractive. The trade-off is that larger value of $\lambda$ will lead to faster contrction in expectation, but induces higher variance.

Note that in general, the radius $\frac{1-\gamma}{\lambda\gamma}$ is fairly conservative because the above conclusion is valid for arbitrary MDPs and arbitrary target and behavior policy. In practice, we find that using larger values of $\lambda$ will often lead to stable learning, i.e., as a result of a contractive operator.

Another intriguing observation is that even though as Lemma~\ref{lemma:signed} showed, the operator $\mathcal{A}_\lambda^{\pi,\mu}$ does not preserve the space of probability distributions, it still has $\eta^\pi$ as the unique fixed point when $\pi$ and $\mu$ are close enough. Consider initializing a distribution vector $\eta_0\in\probspace(\mathbb{R})^{\mathcal{X}\times\mathcal{A}}$ and generate a sequence of iterate based on  $\eta_{k+1}=\mathcal{A}_\lambda^{\pi,\mu}\eta_k$. Lemma~\ref{lemma:signed} suggests that in general, the intermediate iterates $(\eta_k)_{k\geq 1}$ are signed measures. However, as $k\rightarrow\infty$, we can expect the signed measure sequence to converge back to a probability distribution $\eta^\pi$ (see Figure~\ref{fig:iterations} for the illustration). 
This example bears important implications on the parametric representations of intermediate iterates in algorithm designs. By default, one should expect it suffices to represent the iterate in the space of probability distributions, since the fixed point $\eta^\pi$ is itself a probability distribution vector. The case of Q($\lambda$) suggests otherwise: it is necessary to expand the space of representations to signed measures, such that the intermediate iterates $\eta_k$ can be represented.

\paragraph{Alternative way to construct distributional Q($\lambda$).} For interested readers, we also note that there are alternative ways to construct the distributional Q($\lambda$) operator. We provide such a natural alternative in Appendix~\ref{appendix:alternative}, which is closely related to the \emph{path-dependent}  distributional TD errors discussed in \citet{tang2022nature}. Our construction of $\mathcal{A}_\lambda^{\pi,\mu}$ leads to better theoretical properties compared to the alternative.

\section{Learning with categorical representation}\label{sec:categorical}

Return distributions are in general infinite dimensional objects. In practice, it is necessary to approximate the target return distribution with parametric approximations. One commonly used family of parameterization is the categorical representation \citep{bellemare2017distributional,bdr2022}. We provide a brief background below.

\paragraph{Brief background.} In categorical representation, we consider parametric distributions, for a fixed $m\geq 1$, of the form: $\sum_{i=1}^m p_i\delta_{z_i}$, where $(z_i)_{i=1}^m\in\mathbb{R}$ are a fixed set of atoms and $(p_i)_{i=1}^m$ is a categorical distribution such that $\sum_{i=1}^m p_i=1$ and $p_i\geq 0$. For simplicity, we assume $z_i$ to be strictly monotonic $z_i<z_{i+1}$ and the range of atoms covers all possible returns from the MDP $[(1-\gamma)^{-1}R_\text{min},(1-\gamma)^{-1}R_\text{max}]\subset [z_1,z_m]$. Let $\probspace_c(\mathbb{R})$ denote the class of distributions
\begin{align*}
    \probspace_c(\mathbb{R}) \coloneqq \bigg\{ \sum_{i=1}^m p_i\delta_{z_i} | \sum_{i=1}^m p_i=1, p_i\geq 0 \bigg\}.
\end{align*}
When combining parametric representation with the off-policy distributional Q($\lambda$), it is important to account for the fact that signed measures can arise by applying the operator $\mathcal{A}_\lambda^{\pi,\mu}$. We can extend the categorical representation by dropping the non-negativity constraints on $p_i$
\begin{align*}
    \signedspace_{1,c}(\mathbb{R}) \coloneqq \bigg\{ \sum_{i=1}^m p_i\delta_{z_i} | \sum_{i=1}^m p_i=1  \bigg\}.
\end{align*}
Given a target signed measure $\eta\in\mathcal{M}_1(\mathbb{R})$, we define the projection $\Pi_c:\signedspace_\infty(\mathbb{R})\rightarrow \signedspace_{1,c}(\mathbb{R})$ onto the space of categorical distributions by minimizing the $\ell_2$ distance $\Pi_c \eta\coloneqq \arg\min_{\nu\in\signedspace_{1,c}(\mathbb{R})}\ell_2\left(\nu,\eta\right)$. Note a major difference here is that the projection operation also produces a signed measure, which can also be computed in a natural and efficient way \citep{rowland2018analysis,bellemare2017cramer}. Visually, we can understand the categorical projection $\Pi_c\eta$ as a discretized approximation to signed measure $\eta$, see Figure~\ref{fig:categorical-projection} for an illustration. The approximation becomes more accurate as the number of atoms increases.
We refer readers to Chapter 9 of \citet{bdr2022} for further details. 

\subsection{Fixed point and contraction property}

To implement the off-policy distributional Q($\lambda$) in practice, we represent the distribution iterate as a vector of categorical signed measures $\eta_k\in\signedspace_{1,c}(\mathbb{R})^{\mathcal{X}\times\mathcal{A}}$. After applying the operator $\mathcal{A}_\lambda^{\pi,\mu}\eta_k$, we project the back-up target to the space of categorical signed measures. This yields the following recursion
\begin{align}
    \eta_{k+1} = \Pi_c \mathcal{A}_\lambda^{\pi,\mu}\eta_k.\label{eq:categorical-recursion}
\end{align}
Understanding the behavior of the above recursion consists in characterizing the composed operator $\Pi_c \mathcal{A}_\lambda^{\pi,\mu}$. We have the following characterization
\begin{restatable}{lemma}{lemmaprojectedqlambdacontraction}\label{lemma:projectedqlambdacontraction} (\textbf{Contraction of composed operator}) The composed operator $\Pi_c\mathcal{A}_\lambda^{\pi,\mu}$ is $\beta_2$-contractive under the $\bar{L}_2$ distance in the space of signed measure vectors, i.e., $\forall \eta_1,\eta_2\in\signedspace_\infty(\mathbb{R})^{\mathcal{X}\times\mathcal{A}}$,
\begin{align*}
    \bar{L}_2\left(\Pi_c\mathcal{A}_\lambda^{\pi,\mu}\eta_1,\Pi_c\mathcal{A}_\lambda^{\pi,\mu}\eta_2\right)\leq \beta_2\bar{L}_2\left(\eta_1,\eta_2\right),
\end{align*}
where $\beta_2$ is defined in Lemma~\ref{lemma:qlambdacontraction}. The operator is guaranteed to be contractive when $\pi,\mu$ is within the contraction radius defined below
\begin{align*}
    \left\lVert \pi-\mu\right\rVert_1 < \lambda^{-1}\left(\sqrt{(1-\lambda)(\gamma^{-1}-\lambda)} + \lambda - 1\right)
\end{align*}
\end{restatable}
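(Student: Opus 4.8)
The plan is to factor the recursion operator as $\Pi_c\mathcal{A}_\lambda^{\pi,\mu}=\Pi_c\circ\mathcal{A}_\lambda^{\pi,\mu}$ and bound each factor separately in the supremum $\ell_2$ geometry (I write $\bar{L}_2$ for this distance, i.e.\ the $\bar{\ell}_2$ of Lemma~\ref{lemma:qlambdacontraction}). Lemma~\ref{lemma:qlambdacontraction} with $p=2$ already supplies $\bar{L}_2(\mathcal{A}_\lambda^{\pi,\mu}\eta_1,\mathcal{A}_\lambda^{\pi,\mu}\eta_2)\le\beta_2\,\bar{L}_2(\eta_1,\eta_2)$, and Lemma~\ref{lemma:signed}, together with the elementary check that the backed-up atoms $G_{0:t-1}+\gamma^t z$ stay inside the return range $[z_1,z_m]$ (this uses the atom-range assumption and $G_{0:t-1}\ge\tfrac{1-\gamma^t}{1-\gamma}R_{\min}$), guarantees that $\mathcal{A}_\lambda^{\pi,\mu}$ keeps us among unit-mass signed measures of bounded support, so that $\ell_2$ and the projection $\Pi_c$ are well defined along the recursion. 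Hence the only remaining ingredient is that $\Pi_c$ is a non-expansion under $\bar{L}_2$; chaining the two estimates then yields the $\beta_2$-contraction.

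For the non-expansion of $\Pi_c$ I would use the Hilbert-projection viewpoint. Identify each unit-mass, bounded-support signed measure with the difference of its CDF from that of a fixed reference element of $\signedspace_{1,c}(\mathbb{R})$; these differences vanish outside a common compact interval and therefore lie in $L^2(\mathbb{R})$, on which $\ell_2$ is exactly the $L^2$ norm. The categorical signed measures $\signedspace_{1,c}(\mathbb{R})$ correspond to a finite-dimensional, hence closed, affine subspace $V$, and by definition $\Pi_c\eta$ minimizes the $L^2$ distance to $V$; the Hilbert projection theorem identifies this with the orthogonal projection onto $V$, which is $1$-Lipschitz. Since $\Pi_c$ acts coordinatewise over $\mathcal{X}\times\mathcal{A}$, the per-coordinate non-expansion lifts to $\bar{L}_2(\Pi_c\eta_1,\Pi_c\eta_2)\le\bar{L}_2(\eta_1,\eta_2)$. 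This non-expansion is classical for probability measures \citep{rowland2018analysis,bellemare2017cramer}; dropping the non-negativity constraint does not affect the argument, which uses only that the target set is a closed affine subspace.

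The contraction radius is then pure algebra from $\beta_2<1$. With $\epsilon\coloneqq\lVert\pi-\mu\rVert_1$ and $\beta_2=\gamma^{1/2}\frac{1-\lambda+\lambda\epsilon}{(1-\lambda)^{1/2}(1-\lambda\gamma)^{1/2}}$ from Lemma~\ref{lemma:qlambdacontraction}, and since both sides are positive, $\beta_2<1$ is equivalent to $(1-\lambda+\lambda\epsilon)^2<\gamma^{-1}(1-\lambda)(1-\lambda\gamma)=(1-\lambda)(\gamma^{-1}-\lambda)$, hence to $1-\lambda+\lambda\epsilon<\sqrt{(1-\lambda)(\gamma^{-1}-\lambda)}$, i.e.\ $\epsilon<\lambda^{-1}\big(\sqrt{(1-\lambda)(\gamma^{-1}-\lambda)}+\lambda-1\big)$, which is the stated condition. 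I would also note that the right-hand side is non-negative whenever $\gamma<1$, because $\gamma^{-1}-\lambda\ge 1-\lambda$ forces $\sqrt{(1-\lambda)(\gamma^{-1}-\lambda)}\ge 1-\lambda$, so the condition is non-vacuous.

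The main obstacle is the non-expansion step: one must verify that the Hilbert-space/orthogonal-projection picture genuinely applies to signed rather than only probability measures, and that $\mathcal{A}_\lambda^{\pi,\mu}$ never leaves the regime where $\ell_2$ is finite and $\Pi_c$ is well defined, which is precisely where unit mass and bounded support are invoked. Once that is set up, the composition-of-Lipschitz-maps argument and the algebra above are routine, so the overall proof is short.
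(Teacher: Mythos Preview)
Your proposal is correct and follows essentially the same approach as the paper: factor $\Pi_c\mathcal{A}_\lambda^{\pi,\mu}$, invoke the $\beta_2$-contraction of $\mathcal{A}_\lambda^{\pi,\mu}$ from Lemma~\ref{lemma:qlambdacontraction} with $p=2$, and use non-expansiveness of $\Pi_c$ under $\bar{L}_2$. The paper's own proof is a two-line citation of these facts; you go further by (i) giving an explicit Hilbert-projection justification for the non-expansion of $\Pi_c$ in the signed-measure setting, and (ii) carrying out the algebra that converts $\beta_2<1$ into the stated radius condition---both of which the paper omits.
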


Under the categorical representation, there is an inherent limit on how well one can approximate the true return distribution $\eta^\pi$. The best possible approximation is $\Pi_c\eta^\pi$, the direct projection of the target return to the representation space. The irreducible approximation error is $\bar{L}_2\left(\eta^\pi,\Pi_c\eta^\pi\right)$. When using bootstrapping, the approximation error can compound over time. When $\Pi_c\mathcal{A}_\lambda^{\pi,\mu}$ is contractive, the recursion in Eqn~\eqref{eq:categorical-recursion} converges and let $\eta_\mathcal{A}^\pi$ be the fixed point of the composed operator. We can characterize its approximation error to the target return, by extending \citet[Proposition~3]{rowland2018analysis} below.
\begin{restatable}{lemma}{lemmaprojectedqlambdaerror}\label{lemma:projectedqlambdaerror} (\textbf{Approximation error}) When $\beta_2<1$, we have 
\begin{align*}
    \bar{L}_2 \left(\eta^\pi, \eta_\mathcal{A}^\pi\right) \leq \frac{\bar{L}_2\left(\eta^\pi,\Pi_c\eta^\pi\right)}{\sqrt{1-\beta_2^2}}.
\end{align*}
\end{restatable}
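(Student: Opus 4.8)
The plan is to follow the template of \citet[Proposition~3]{rowland2018analysis}, using that the categorical projection $\Pi_c$ is the metric (hence orthogonal) projection in the $\bar{L}_2$/Cram\'er geometry onto the affine set $\signedspace_{1,c}(\mathbb{R})$, combined with the contraction of the composed operator from Lemma~\ref{lemma:projectedqlambdacontraction} and the fixed-point property from Lemma~\ref{lemma:qlambdafixedpoint}. First I would note that $\eta_\mathcal{A}^\pi$ is well defined: when $\beta_2<1$, Lemma~\ref{lemma:projectedqlambdacontraction} makes $\Pi_c\mathcal{A}_\lambda^{\pi,\mu}$ a contraction on the complete metric space $(\signedspace_1(\mathbb{R})^{\mathcal{X}\times\mathcal{A}},\bar{L}_2)$, so Banach's fixed point theorem gives existence and uniqueness of $\eta_\mathcal{A}^\pi$.

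The core step is a pointwise Pythagorean decomposition. Fix a state--action pair $(x,a)$. Since $\Pi_c\eta^\pi(x,a)$ is the $\ell_2$-orthogonal projection of $\eta^\pi(x,a)$ onto the affine space of categorical signed measures, and $\eta_\mathcal{A}^\pi(x,a)$ already lies in that space, we get
\begin{align*}
    \ell_2\!\left(\eta_\mathcal{A}^\pi(x,a),\eta^\pi(x,a)\right)^2 = \ell_2\!\left(\eta_\mathcal{A}^\pi(x,a),\Pi_c\eta^\pi(x,a)\right)^2 + \ell_2\!\left(\Pi_c\eta^\pi(x,a),\eta^\pi(x,a)\right)^2 .
\end{align*}
Taking the maximum over $(x,a)$ and using $\max_i(a_i+b_i)\le\max_i a_i+\max_i b_i$ yields
\begin{align*}
    \bar{L}_2\!\left(\eta_\mathcal{A}^\pi,\eta^\pi\right)^2 \le \bar{L}_2\!\left(\eta_\mathcal{A}^\pi,\Pi_c\eta^\pi\right)^2 + \bar{L}_2\!\left(\Pi_c\eta^\pi,\eta^\pi\right)^2 .
\end{align*}

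Next I would control the first term on the right. By Lemma~\ref{lemma:qlambdafixedpoint}, $\eta^\pi=\mathcal{A}_\lambda^{\pi,\mu}\eta^\pi$, hence $\Pi_c\eta^\pi=\Pi_c\mathcal{A}_\lambda^{\pi,\mu}\eta^\pi$; and by definition $\eta_\mathcal{A}^\pi=\Pi_c\mathcal{A}_\lambda^{\pi,\mu}\eta_\mathcal{A}^\pi$. Applying the $\beta_2$-contraction of $\Pi_c\mathcal{A}_\lambda^{\pi,\mu}$ from Lemma~\ref{lemma:projectedqlambdacontraction},
\begin{align*}
    \bar{L}_2\!\left(\eta_\mathcal{A}^\pi,\Pi_c\eta^\pi\right) = \bar{L}_2\!\left(\Pi_c\mathcal{A}_\lambda^{\pi,\mu}\eta_\mathcal{A}^\pi,\ \Pi_c\mathcal{A}_\lambda^{\pi,\mu}\eta^\pi\right) \le \beta_2\,\bar{L}_2\!\left(\eta_\mathcal{A}^\pi,\eta^\pi\right).
\end{align*}
Substituting back gives $\bar{L}_2(\eta_\mathcal{A}^\pi,\eta^\pi)^2 \le \beta_2^2\,\bar{L}_2(\eta_\mathcal{A}^\pi,\eta^\pi)^2 + \bar{L}_2(\Pi_c\eta^\pi,\eta^\pi)^2$; since $\beta_2<1$ and $\bar{L}_2(\eta_\mathcal{A}^\pi,\eta^\pi)$ is finite (both vectors have bounded support, so the relevant CDFs differ only on a bounded interval), rearranging produces the stated bound.

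The only genuine subtlety, and the step I would be most careful to justify, is the pointwise Pythagorean identity for \emph{signed} measures: it rests on the facts that $\ell_2$ between (signed) measures equals the $L^2$-norm of the difference of their CDFs, that the CDFs of elements of $\signedspace_{1,c}(\mathbb{R})$ form an affine subspace of that $L^2$ space, and that $\Pi_c$ is the associated orthogonal projection. All of these are part of the categorical-representation machinery we invoke \citep{rowland2018analysis,bellemare2017cramer,bdr2022}; everything else is the same triangle/Pythagoras-plus-rearrangement argument used in the value-based and on-policy distributional settings, now driven by $\beta_2$ rather than $\gamma^{1/2}$.
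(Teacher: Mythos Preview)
Your proof is correct and follows exactly the standard Pythagoras-plus-contraction argument that underlies the cited result; the paper's own proof simply defers to Proposition~5.28 of \citet{bdr2022} (equivalently, \citet[Proposition~3]{rowland2018analysis}) rather than spelling out these steps, so you have effectively unpacked what the paper takes as a black box.
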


 We see that the parameter $\beta_2$ appear both as the contraction rate as well as a factor in the approximation error. An operator with a fast contraction rate will also have smaller approximation error, which is a recurrent observation made in prior work \citep{bdr2022}. We will validate this theoretical insight in the tabular experiment.

\begin{figure}[t]
    \centering
    \includegraphics[keepaspectratio,width=.45\textwidth]{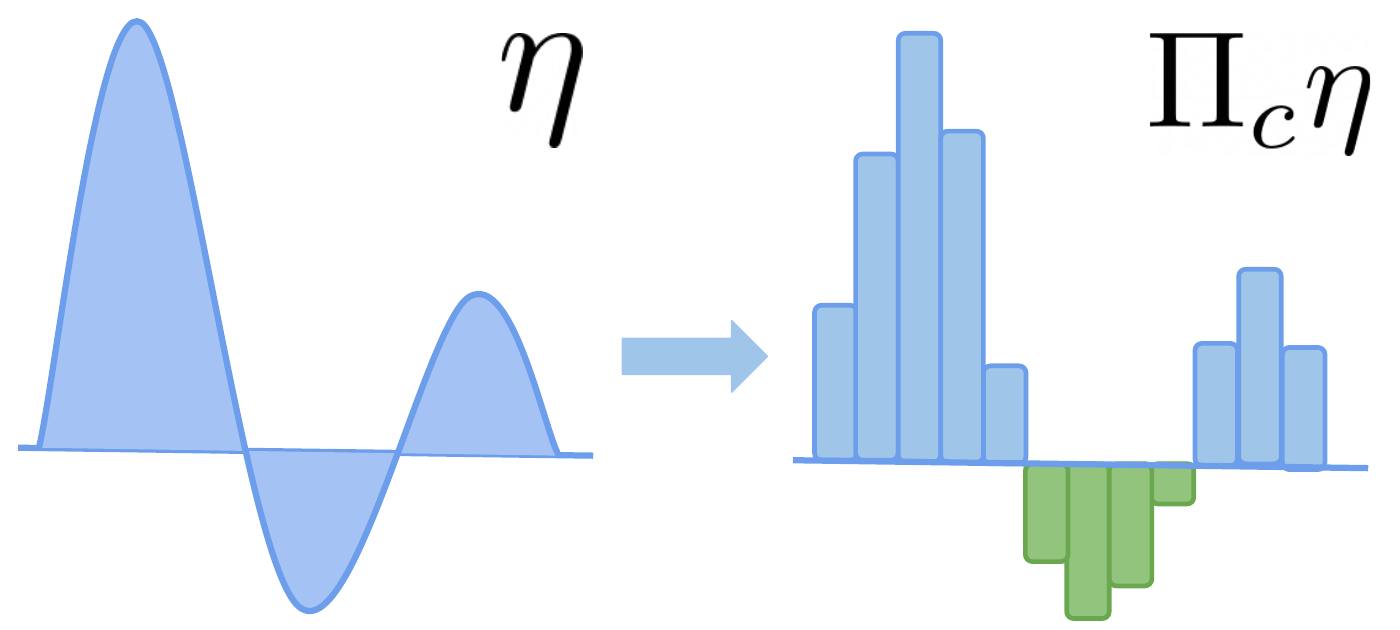}
    \caption{Illustration of categorical projection for the signed measure. On the left, we have a signed measure $\eta\in\signedspace_1(\mathbb{R})$; on the right, we show the categorical projection of the signed measure $\eta$ onto the space $\signedspace_{1,c}(\mathbb{R})$, with green bars showing the negative mass of the projected measure. The categorical projection is a discretized approximation to the original signed measure, with increasing accuracy as the number of atoms $(z_i)_{i=1}^m$ increases.}
    \label{fig:categorical-projection}
\end{figure}

\section{Deep RL implementation: Q($\lambda$)-C51} 
\label{sec:deeprl}

We now describe the deep RL implementation of Q($\lambda$)-C51, an extension of the C51 agent \citep{bellemare2017distributional} to the Q($\lambda$) operator. By design, the agent parameterizes the categorical return distribution $\left(p_i(x,a;\theta)\right)_{i=1}^m$ with neural network parameter $\theta$. Let $\eta_\theta(x,a)=\sum_{i=1}^m p_i(x,a;\theta)\delta_{z_i}$ denote the parameterized categorical distribution. In the original C51 agent, given a target policy $\pi$ and behavior policy $\mu$, the aim is to minimize the KL-divergence between the parameterized categorical distribution and the one-step back-up target

\begin{align*}
    \mathbb{KL}\left(\Pi_c\mathcal{T}^\pi \eta_\theta(x,a), \eta_\theta(x,a)\right),
\end{align*}
where $\mathbb{KL}\left(p,q\right)\coloneqq\sum_i p_i\log q_i / p_i$. To this end, the algorithm carries out gradient descent on the KL-divergence
\begin{align*}
    \theta\leftarrow\theta-\kappa\cdot\nabla_\theta\mathbb{KL}\left(\Pi_c\mathcal{T}^\pi \eta_{\theta^-}(x,a), \eta_\theta(x,a)\right)
\end{align*}
with learning rate parameter $\kappa>0$ and $\theta^-$ is the target network parameter, usually computed as a slow moving average of $\theta$ \citep{mnih2013}. To adapt C51 for off-policy distributional Q($\lambda$), we carry out the gradient update, \emph{heuristically} written as
\begin{align*}
    \theta\leftarrow\theta-\kappa\cdot\nabla_\theta\mathbb{KL}\left(\Pi_c\mathcal{A}_\lambda^{\pi,\mu} \eta_{\theta^-}(x,a), \eta_\theta(x,a)\right).
\end{align*}
Here, \emph{heuristically} refers to the fact that the KL-divergence might not be well defined since $\Pi_c\mathcal{A}_\lambda^{\pi,\mu} \eta_{\theta^-}(x,a)$ can be a sigend measure. To make the implementation more concrete, note that we can always write $\mathcal{A}_\lambda^{\pi,\mu}\eta_{\theta^-}(x,a)$ as a linear combination of proper distributions generated from the future time step
\begin{align*}
    \mathcal{A}_\lambda^{\pi,\mu}\eta_{\theta^-}(x,a) = \mathbb{E}_\mu\left[\sum_{t=0}^\infty w_t \left(\textrm{b}_{G_{0:t-1},\gamma^{t}}\right)_{\#} \eta_{\theta^-}(X_t,A_t)\right] 
\end{align*}
where the combination coefficient $w_t$ can be expressed as
\begin{align*}
    w_t \coloneqq \mathbb{E}_\mu\left[c_1...c_{t-1}\left(\pi(b|X_t) - c(X_t,b)\mu(b|X_t)\right)\right].
\end{align*}
Note that $w_t$ can be negative, whereas for the case of distributional Retrace the coefficient yields the same form but with $w_t\geq 0$; this echos the unique interaction that Q($\lambda$) introduces with signed measures. For more detailed on the derivations of $w_t$, see \citet{tang2022nature} and Appendix~\ref{appendix:proof}. 

Next, we can construct unbiased estimate to the above back-up target by sampling trajectories with the behavior policy $(X_t,A_t,R_t)_{t=0}^\infty \sim \mu$ and calculate the signed measure back-up using the linear combination
\begin{align*}
    \widehat{\mathcal{A}}_\lambda^{\pi,\mu}\eta_{\theta^-}(x,a) = \sum_{t=0}^\infty \widehat{w}_t \underbrace{\left(\textrm{b}_{G_{0:t-1},\gamma^{t}}\right)_{\#} \eta_{\theta^-}(X_t,A_t)}_{\text{proper distribution}},
\end{align*}
where the scalar weights can be computed along the sampled trajectory $\widehat{w}_t=c_1...c_{t-1}\left(\pi(b|X_t) - c(X_t,b)\mu(b|X_t)\right)$. It is straightforward to see that these are unbiased estimates to the coefficients $w_t$. As a result, 
$\widehat{\mathcal{A}}_\lambda^{\pi,\mu}\eta_{\theta^-}(x,a)$ is an unbiased estimate to the signed measure target $\mathcal{A}_\lambda^{\pi,\mu}\eta_{\theta^-}(x,a)$, and is in general a signed measure though each of the unweighted summand $\left(\textrm{b}_{G_{0:t-1},\gamma^{t}}\right)_{\#} \eta_{\theta^-}(X_t,A_t)$ is a proper distribution.
Finally, we compute the gradient update $\widehat{g}_\theta$ as a weighted average of gradients through individual well-defined KL divergences against the prediction $\eta_\theta(x,a)$,
\begin{align}
    \sum_{t=0}^\infty \widehat{w}_t \nabla_\theta\mathbb{KL}\left(\eta_\theta(x,a), \left(\textrm{b}_{G_{0:t-1},\gamma^{t}}\right)_{\#} \eta_{\theta^-}(X_t,A_t)\right) \label{eq:estimate-grad}
\end{align}
See Algorithm 1 for the full algorithmic process.

\subsection{Adapting target policy for optimal control} \label{sec:trust-region}

A practical objective is to maximize the agent performance over time, i.e., optimal control. In this case, we let the target policy be the greedy policy with respect to $Q_{\eta_k}$ where $Q_{\eta_k}$ is the Q-function induced by $\eta_k$. Due to space limit, we state results for optimal control in Appendix~\ref{appendix:optimal}.

One way to interpret the results on policy evaluation is that we can impose constraints on the target policy $\pi_k$ such that it stays within the contraction radius. Concretely, we can let the target policy be a mixture between the greedy policy and behavior policy $\mu$ with $\alpha\in[0,1]$.
\begin{align}
    \pi_k = \alpha \mathcal{G}\left(Q_{\eta_k}\right) + (1-\alpha) \mu \label{eq:mixing-target}
\end{align}
In practice, when the behavior policy is typically defined through a replay buffer, $\mu$ slowly varies over time. In that case, $\alpha$ would be introduced as an extra hyper-parameter.

The above approach is reminiscent of trust region policy optimization \citep{kakade2002approximately,schulman2015}, where the algorithm imposes a trust region constraint over consecutive policy iterates. The contraction radius of distributional Q($\lambda$) or value-based Q($\lambda$) \citep{harutyunyan2016q} can also be understood as a justification to trust region updates, see \citet{tang2020taylor} also for similar discussions.

By regularizing the target policy $\pi_k$ towards the behavior policy, we have made the off-policy evaluation problem more on-policy, which effectively speeds up the contraction rate of the distributional Q($\lambda$) operator. A similar implementation has been considered in \citet{rowland2019adaptive}, with a a similar mixing strategy to speed up the contraction rate of the value-based Retrace algorithm.

\section{Related work}

\paragraph{Distributional Peng's Q($\lambda$).} A closely related operator variant is distributional Peng's Q($\lambda$), adapted from value-based Peng's Q($\lambda$). Since value-based Peng's Q($\lambda$) is a geometrically weighted mixture of $n$-step uncorrected value-based back-ups \citep{peng1994incremental,kozuno2021revisiting}, we can define distributional Peng's Q($\lambda$) as a similar mixture of $n$-step uncorrected distributional-based back-ups. 
\begin{align*}
    \mathcal{P}_\lambda^{\pi,\mu}\eta(x,a) \coloneqq (1-\lambda) \sum_{n=1}^\infty \lambda^{n-1} \mathcal{P}_n^{\pi,\mu}\eta(x,a) 
\end{align*}
where 
$
   \mathcal{P}_n^{\pi,\mu}\eta(x,a)  =  \mathbb{E}_\mu\left[\left(\textrm{b}_{G_{0:n-1},\gamma^{n}}\right)_{\#} \eta(X_n,A_n^\pi)  \right]
$
is the $n$-step uncorrected distributional back-up. In general, since distributional Peng's Q($\lambda$) does not perform off-policy corrections, it enjoys faster contraction but has a fixed point which generally differs from the target fixed point $\eta^\pi$ in off-policy learning. In this work, our focus is on distributional evaluation operators with the correct target fixed point. See Appendix~\ref{appendix:peng} for more details.

\paragraph{Value-based Q($\lambda$) and distributional Retrace.} Many theoretical results on distributional Q($\lambda$) echo value-based Q($\lambda$) \citep{harutyunyan2016q}, such as the contraction radius between target and behavior policy. A key difference between the value-based and distributional setting is the representation; while value-based Q($\lambda$) requires representing a scalar per state-action pair, distributional Q($\lambda$) requires a parameterized signed measure per state-action pair. This latter property also precludes distributional Q($\lambda$) from being a special case of the distributional Retrace  \citep{tang2022nature}.

For technically minded readers, we also discuss in Appendix~\ref{appendix:alternative} how the alternative constructs of distributional Q($\lambda$) differ from that of distributional Retrace.

\section{Experiments}
\label{sec:exp}

\begin{figure}[t]
    \centering
    \includegraphics[keepaspectratio,width=.4\textwidth]{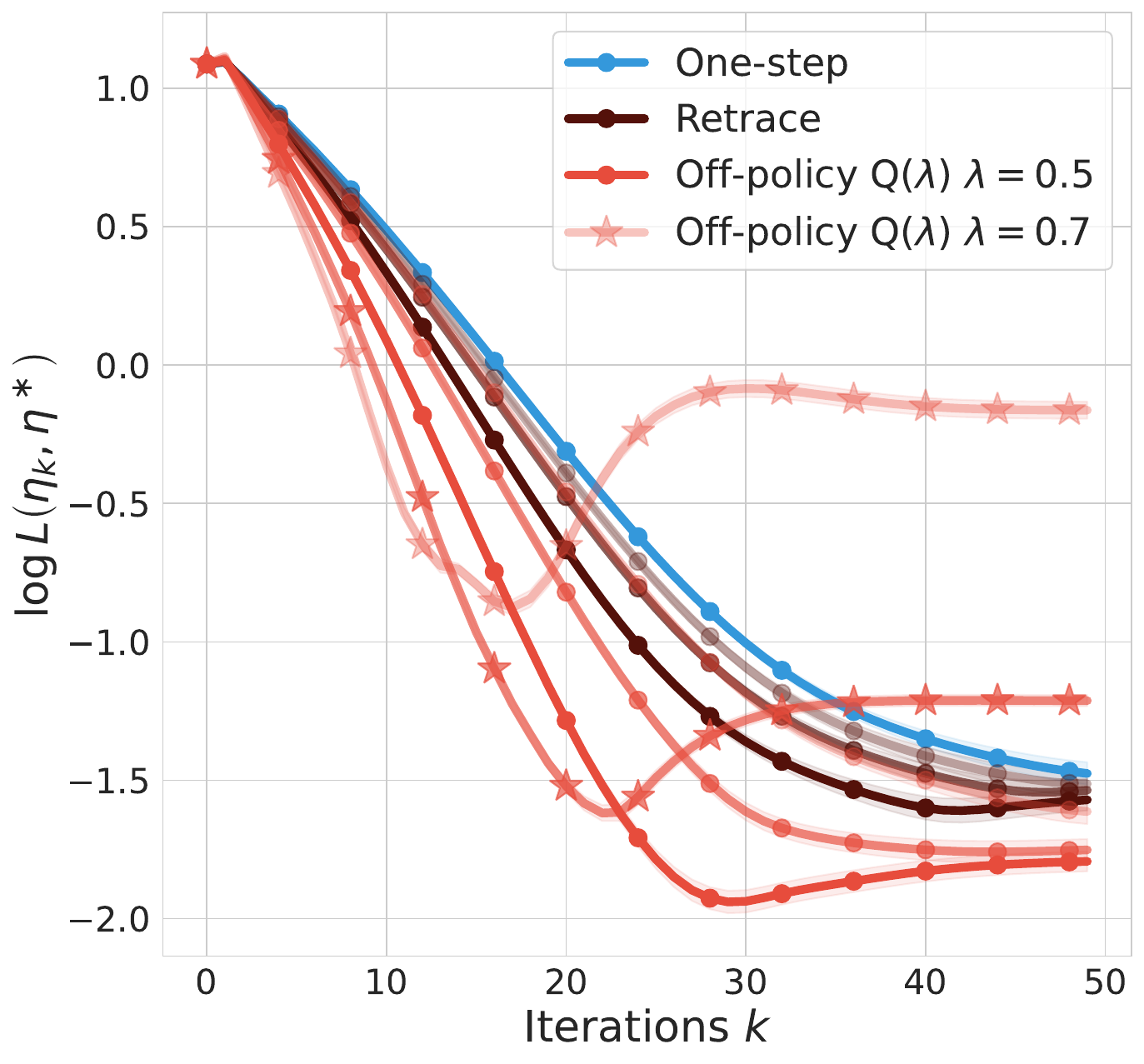}
    \caption{The distance between the algorithmic iterate $\eta_k$ and return distribution for the optimal policy $\eta^\ast$, as we run control algorithms with distributional one-step, Retrace and off-policy Q($\lambda$). All algorithms use categorical representations and set greedy policy as the target policy. Different curves show an algorithmic variant with a different hyper-parameter setting ($\bar{c}$ for Retrace and $\lambda$ for Q($\lambda$)). Note that Q($\lambda$) can obtain better performance than Retrace when $\lambda$ is chosen properly; when $\lambda$ is too large ($\geq 0.7$ in this case), the algorithm diverges -- despite the initial fast decay in the distance, will not converge to the correct fixed point.}
    \label{fig:qlambda_control}
\end{figure}

We start with tabular experiments which validate a number of theoretical insights, followed by an assessment of off-policy Q($\lambda$) in large-scale deep RL experiments.

\begin{figure*}[t]
    \centering
    \includegraphics[keepaspectratio,width=.8\textwidth]{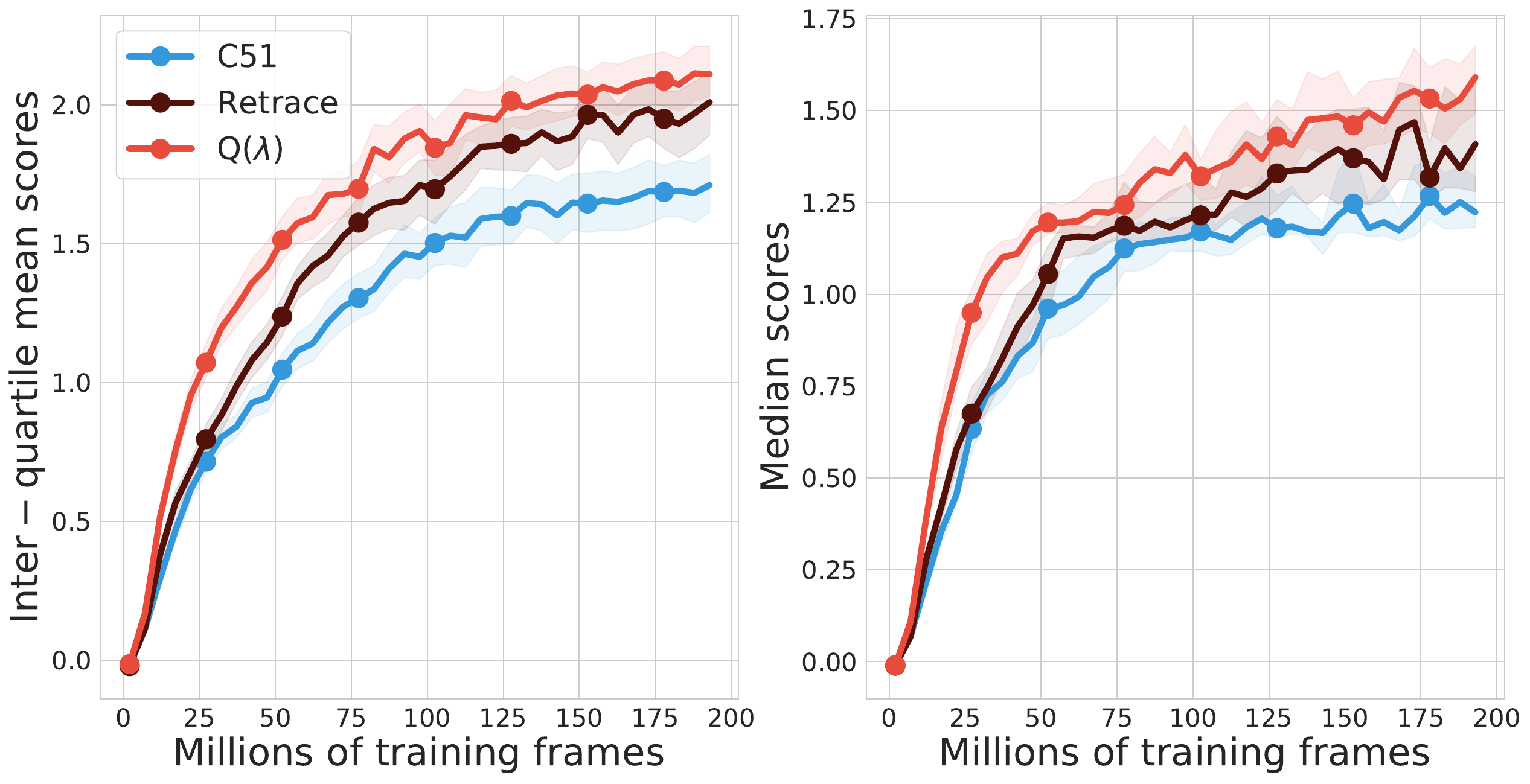}
    \caption{\small{Comparison of C51 \citep{bellemare2017distributional}, Retrace-C51 \citep{tang2022nature} and off-policy distributional Q($\lambda$) with target mixing $\alpha=0.6$ based on Eqn~\eqref{eq:mixing-target} and $\lambda=0.4$. We show the agents' average performance metrics evaluated throughout training: the inter-quartile mean score \citep{agarwal2021deep}, which can be understood as a more robust estimate to the mean score; and the median score, calculated across all 57 games. All scores show the mean and bootstrapped confidence intervals across $5$ seeds \citep{agarwal2021deep}. Off-policy distributional Q($\lambda$) obtains performance improvements over Retrace-C51 when using target mixing.}}
    \label{fig:deeprl}
\end{figure*}

\subsection{Tabular experiments}

We consider a tabular MDP setting with $|\mathcal{X}|=5$ states and $|\mathcal{A}|=20$ actions with discount $\gamma=0.9$. Both the transitions are the reward functions are randomly generated and fixed. The data collection policy $\mu$ is uniform for all time. We compare distributional one-step, Retrace and off-policy Q($\lambda$) with categorical representations for $m=10$, and in the optimal control setting. Throughout, the target policy is the greedy policy induced by the iterate $\eta_k$ with $\eta_{k+1}=\Pi_c\mathcal{R}\eta_k$ where $\mathcal{R}$ is the distributional operator of interest. For Retrace we sweep over the truncation coefficient $\bar{c}\in\{1,2,4\}$ and for Q($\lambda$) over $\lambda\in\{0.1,0.3,0.5,0.7,0.9\}$. Each experiment starts with the same initialization and is repeated $20$ times to show standard errors across seeds. In Figure~\ref{fig:qlambda_control}, we show the distance $L(\eta_k,\eta^\ast)$ as a function of iteration $k$ for different algorithmic variants.

We make a few observations: (1) Both Retrace and off-policy Q($\lambda$) improve over one-step both in terms of asymptotic performance and contraction speed. This corroborates the theoretical insight that the contraction rate and the distance to the final fixed point is related; (2) In this particular case, Q($\lambda$) outperforms Retrace when $\lambda$ is chosen properly. However, when $\lambda$ is too large ($\geq 0.7$ here), the algorithm becomes divergent - despite a fast initial decay in the distance, will converge to a clearly sub-optimal point. Here, a caveat is that since we are testing the case for dynamic programming, we have not considered the variance effect of setting large values for $\bar{c}$ and $\lambda$. Such factors should be accounted for in practice.

Note that with $|\mathcal{A}|=20$, we have made the algorithm effectively more off-policy. When we decrease  $|\mathcal{A}|$, we see that both Retrace and off-policy Q($\lambda$) become better behaved: Q($\lambda$) becomes convergent even with large $\lambda$, and Retrace may outperform Q($\lambda$) by benefiting from the full trace with IS corrections. See more results in Appendix~\ref{appendix:exp}

\subsection{Deep RL experiments}
For the deep RL experiments, we use the Atari game suite of 57 games as the testbed and compare distributional RL agents: C51 \citep{bellemare2017cramer}, Retrace-C51  \citep{tang2022nature} and Q($\lambda$)-C51.
All three algorithms share exactly the same network architecture $\theta$: they parameterize the image inputs via a convnet, followed by MLPs that transform the convnet embeddings into $m$-logits $l(x,a,i),1\leq i\leq m$ per action $a$. The final prediction is computed as a softmax distribution over the logits $p_i(x,a;\theta)\propto \exp(l(x,a,i))$. 
The algorithms and only differ in the back-up targets used for updating the predicted return distribution $\eta_\theta(x,a)$.

For both distributional Retrace and distributional Q($\lambda$), we calculate the back-up targets with partial trajectories of length $n=3$ sampled from the replay buffer. This is consistent with practices in prior work \citep{tang2022nature}. The C51 baseline can be recovered as a special case with $n=1$. In general, the acting policy is usually the $\epsilon$-greedy policy with respect to the Q-function induced by the return distribution $Q_{\eta_\theta(x,a)}$. The value of $\epsilon$ decays over time, in order to achieve a good balance between exploration and exploitation.
The transition tuple $(X_t,A_t,R_t)$ is then put into a replay buffer, which gets sampled when constructing the back-up target. As a result, the effective behavior policy $\mu$ is a mixture of $\epsilon$-greedy policy over time. By default, the target policy is greedy with respect to the current Q-function.

\paragraph{Trust region adaptation of target policy.} Motivated by the connection between off-policy Q($\lambda$) and trust region updates, we consider a variant of Q($\lambda$) which constructs the target distribution as the mixture between greedy and behavior policy (Eqn~\eqref{eq:mixing-target}). This introduces the mixing coefficient as an extra hyper-parameter to the algorithm $\alpha$, which we find to work best when it is around $0.6\sim 0.8$. Meanwhile, we  find $\alpha=1$ (i.e., greedy policy) to work generally sub-optimally.

In Figure~\ref{fig:deeprl}, we compare such a variant of off-policy distributional Q($\lambda$) against baseline C51 and Retrace-C51. In this case, distributional Q($\lambda$) obtains certain performance improvements over Retrace-C51, which further improves over C51 as shown in \citep{tang2022nature}.

\section{Conclusion}

We have proposed off-policy distributional Q($\lambda$), a new addition to the distributional RL arsenal. Without importance sampling distributional Q($\lambda$) has a few intriguing theoretical properties: its efficacy depends on the level of off-policyness and introduces unique interplay with signed measure representations. These properties set distributional Q($\lambda$) aside from previous approaches. Distributional Q($\lambda$) also enjoys promising empirical performance, when tested on both tabular and deep RL domains.

\bibliography{main}
\bibliographystyle{plainnat}

\clearpage
\onecolumn

\begin{appendix}

\section*{\centering APPENDICES: Off-policy Q($\lambda$) for distributional reinforcement learning}

\section{Detailed derivations of off-policy distributional Q($\lambda$)}
\label{appendix:alternative}

Here, we provide a detailed alternative derivation of off-policy distributional Q($\lambda$) operator. We start with the on-policy $n$-step distributional operator
\begin{align*}
    \mathcal{T}_n^\pi\eta(x,a) = \mathbb{E}_\pi\left[\left(\textrm{b}_{G_{0:n-1},\gamma^{n}}\right)_{\#} \eta(X_n,A_n^\pi)  \right],
\end{align*}
which reduces to the distributional Bellman operator $\mathcal{T}^\pi$ when $n=1$ \citep{bellemare2017distributional}. Note the $n$-step operator has $\eta^\pi$ as the unique fixed point.

The on-policy distributional Q($\lambda$) operator can be constructed as the geometrically weighted mixture of $n$-step distributional Bellman operator.
\begin{align*}
    \mathcal{T}_\lambda^{\pi}\eta(x,a) &\coloneqq (1-\lambda) \sum_{n=1}^\infty \lambda^{n-1} \mathcal{T}_n^\pi\eta(x,a) \\ 
    &= (1-\lambda)\sum_{n=1}^\infty \mathbb{E}_\pi\left[\left(\textrm{b}_{G_{0:n-1},\gamma^{n}}\right)_{\#} \eta(X_n,A_n^\pi)  \right].
\end{align*}
The on-policy Q($\lambda$) operator has $\eta^\pi$ as the unique fixed point by design. The on-policy nature of the operator is reflected by the fact that the expectation is taken under target policy $\pi$. Now, we rewrite the above operator in the form of distributional TD error,
\begin{align*}
   \mathcal{T}_\lambda^{\pi}\eta(x,a)= \eta(x,a) + \mathbb{E}_{\pi}\left[ \sum_{t=0}^\infty \lambda^t \cdot \left(\textrm{b}_{G_{0:t-1},\gamma^{t}}\right)_{\#} \Delta_t^\pi  \right],
\end{align*}
where $\Delta_t^\pi=\mathcal{T}^\pi\eta(X_t,A_t)-\eta(X_t,A_t)$ is a signed measure with zero total mass. To derive the off-policy distributional Q($\lambda$) operator, we simply replace the expectation under $\pi$ by an expectation under behavior policy $\mu$. This yields the operator
\begin{align*}
   \mathcal{A}_\lambda^{\pi,\mu}\eta(x,a)= \eta(x,a) + \mathbb{E}_{\mu}\left[ \sum_{t=0}^\infty \lambda^t \cdot \left(\textrm{b}_{G_{0:t-1},\gamma^{t}}\right)_{\#} \Delta_t^\pi  \right].
\end{align*}

\section{Distributional Peng's Q($\lambda$) operator}
\label{appendix:peng}
We provide a more detailed discussion on distributional Peng's Q($\lambda$) operator. Starting from the on-policy $n$-step distributional operator,
\begin{align*}
    \mathcal{T}_n^\pi\eta(x,a) = \mathbb{E}_\pi\left[\left(\textrm{b}_{G_{0:n-1},\gamma^{n}}\right)_{\#} \eta(X_n,A_n^\pi)  \right],
\end{align*}
we derive the uncorrected $n$-step operator, by simply replacing the expectation under $\pi$ by an expectation under $\mu$,
\begin{align*}
    \mathcal{P}_n^{\pi,\mu}\eta(x,a) = \mathbb{E}_\mu\left[\left(\textrm{b}_{G_{0:n-1},\gamma^{n}}\right)_{\#} \eta(X_n,A_n^\pi)  \right].
\end{align*}
The uncorrected $n$-step operator, as its name suggests, does not have $\eta^\pi$ as the fixed point in general. This is because the operator does not correct for the off-policyness between $\pi$ and $\mu$ and takes a plain expectation over $\mu$. The  distributional Peng's Q($\lambda$) operator, is simply a geometrically weighted mixture of uncorrected $n$-step operators
\begin{align*}
    \mathcal{P}_\lambda^{\pi,\mu}\eta(x,a) &\coloneqq (1-\lambda) \sum_{n=1}^\infty \lambda^{n-1} \mathcal{P}_n^{\pi,\mu}\eta(x,a) \\ 
    &= (1-\lambda)\sum_{n=1}^\infty \mathbb{E}_\mu\left[\left(\textrm{b}_{G_{0:n-1},\gamma^{n}}\right)_{\#} \eta(X_n,A_n^\pi)  \right].
\end{align*}

\section{Proof of theoretical results}
\label{appendix:proof}

\lemmasigned*
\begin{proof}
   Our proof follows closely the proof techniques of Lemma 3.1 in Tang et al \citep{tang2022nature}. Following their approach, we consider the general notation of trace coefficient $c_t$ which in our case is $\lambda$. For all $t\geq 1$, we define the coefficient
\begin{align*}
    w_{y,b,r_{0:t-1}} \coloneqq \mathbb{E}_\mu\left[c_1...c_{t-1}\left(\pi(b|X_t) - c(X_t,b)\mu(b|X_t)\right)\cdot \mathbb{I}[X_t=y]\Pi_{s=0}^{t-1}\mathbb{I}[R_s=r_s]\right].
\end{align*}
Let $\mathscr{R}$ be the set of reward value that random variable $R_t$ can take. Let $\mathcal{R}^t=\mathcal{R}\times \mathcal{R}\times...\mathcal{R}$ be the Cartesian product of $t$ replicates of $\mathcal{R}$. Through careful algebra, we can rewrite the off-policy Q($\lambda$) operator as follows
\begin{align*}
    \mathcal{A}_\lambda^{\pi,\mu}\eta(x,a) = \sum_{t=1}^{\infty} \sum_{y\in\mathcal{X}} \sum_{b\in\mathcal{A}} \sum_{r_{0:t-1}\in\mathcal{R}^t} w_{y,b,r_{0:t-1}} \left(\textrm{b}_{G_{0:t-1},\gamma^t}\right)_{\#} \eta(y,b).
\end{align*}
Note that each term of the form $\left(\textrm{b}_{G_{0:t-1},\gamma^t}\right)_{\#} \eta(y,b)$ corresponds to applying a pushforward operation $\left(\textrm{b}_{G_{0:t-1},\gamma^t}\right)_{\#}$ on the signed measure $\eta(x,a)$, which means $\left(\textrm{b}_{G_{0:t-1},\gamma^t}\right)_{\#} \eta(y,b)\in\signedspace_\infty(\mathbb{R})$. Now, we examine the sum of all coefficients $\sum w_{y,b,r_{0:t-1}} =  \sum_{t=1}^{\infty} \sum_{x\in\mathcal{X}}\sum_{b\in\mathcal{A}}\sum_{r_{0:t-1}\in\mathcal{R}^t} w_{y,b,r_{0:t-1}}$. Tang et al. has showed that for general $c_t$,
\begin{align*}
  \sum w_{y,b,r_{0:t-1}}
   = 1
\end{align*}
This implies that $\mathcal{A}_\lambda^{\pi,\mu}\in\signedspace_1(\mathbb{R})$ as it is a linear combination of signed measures with total mass $1$. A critical difference here is that since $c_t\not\in[0,\rho_t]$ as in the Retrace case, there is no general guarantee that $w_{y,b,r_{0:t-1}}\geq 0$.
\end{proof}

\lemmaqlambdafixedpoint

\begin{proof}
By construction, $\mathcal{A}_\lambda^{\pi,\mu}$ is a weighted sum of distributional TD error $\Delta_t^\pi$ under policy $\pi$. By letting $\eta=\eta^\pi$, we have $\mathbb{E}\left[\Delta_t^\pi\;\middle|\;X_t,A_t\right]=0$ (note that here the right hand side is a \emph{zero measure}). This implies $\mathcal{A}_\lambda^{\pi,\mu}\eta^\pi=\eta^\pi$ and verifies $\eta^\pi$ as a fixed point of the operator.
\end{proof}

\lemmaqlambdacontraction*

\begin{proof}
From the proof of Lemma~\ref{lemma:signed}, we can write
\begin{align*}
    \mathcal{A}_\lambda^{\pi,\mu}\eta(x,a) = \sum_{t=1}^{\infty} \sum_{x\in\mathcal{X}} \sum_{a\in\mathcal{A}} \sum_{r_{0:t-1}\in\mathcal{R}^t} w_{x,a,r_{0:t-1}} \left(\textrm{b}_{G_{0:t-1},\gamma^t}\right)_{\#} \eta(x,a).
\end{align*}
For $1\leq t\leq n$, we have
\begin{align*}
    w_{x,a,r_{0:t-1}} &\coloneqq \mathbb{E}_\mu\left[c_1...c_{t-1} \left(\pi(a|X_{t})-c(X_t,a)\mu(a|X_t)\right) \cdot \mathbb{I}[X_{t}=x]\Pi_{s=0}^{t-1}\mathbb{I}[R_s=r_s]\right] \\
    &= \mathbb{E}_\mu\left[\lambda^{t-1} \left(\pi(a|X_{t})-\lambda\mu(a|X_t)\right) \cdot \mathbb{I}[X_{t}=x]\Pi_{s=0}^{t-1}\mathbb{I}[R_s=r_s]\right]
\end{align*}
For any $t\geq 1$, we upper bound the absolute value of the weight coefficient $w_{x,a,r_{0:t-1}}$ as follows
\begin{align*}
    &= \left| \lambda^{t-1}\cdot (1-\lambda)\mathbb{E}\left[\pi(a|X_t)\cdot \mathbb{I}[X_{t}=x]\Pi_{s=0}^{t-1}\mathbb{I}[R_s=r_s]\right] + \lambda^{t-1}\lambda \cdot  \mathbb{E}\left[ \left(\pi(a|X_t)-\mu(a|X_t)\right) \mathbb{I}[X_{t}=x]\Pi_{s=0}^{t-1}\mathbb{I}[R_s=r_s]\right] \right|
    \\ &\leq_{(a)} \lambda^{t-1}\cdot (1-\lambda)\mathbb{E}\left[\pi(a|X_t)\cdot \mathbb{I}[X_{t}=x]\Pi_{s=0}^{t-1}\mathbb{I}[R_s=r_s]\right] + \lambda^{t-1}\lambda \cdot \mathbb{E}\left[\left|\pi(a|X_t)-\mu(a|X_t)\right| \cdot \mathbb{I}[X_{t}=x]\Pi_{s=0}^{t-1}\mathbb{I}[R_s=r_s]\right]  \\
    &\leq_{(b)} \lambda^{t-1}\cdot (1-\lambda)\mathbb{E}\left[\pi(a|X_t)\cdot \mathbb{I}[X_{t}=x]\Pi_{s=0}^{t-1}\mathbb{I}[R_s=r_s]\right] + \lambda^{t-1}\lambda \cdot \mathbb{E}\left[\epsilon \cdot \mathbb{I}[X_{t}=x]\Pi_{s=0}^{t-1}\mathbb{I}[R_s=r_s]\right] 
    \eqqcolon |w_{x,a,r_{0:t-1}}|
\end{align*}
Above, (a) follows from the triangle inequality; (b) follows from the fact that for any random variable $Z$,  $|\mathbb{E}[Z]|\leq \mathbb{E}[|Z|]$; for (b), we also apply the fact that $\left\lVert \pi-\mu\right\rVert_1\coloneqq \max_{x\in\mathcal{X}}\sum_{a\in\mathcal{A}}|\pi(a|x)-\mu(a|x)|=\epsilon$. 
Now, we define a signed measure as the negative of the distribution $\left(\textrm{b}_{G_{0:t-1},\gamma^t}\right)_{\#} \eta(x,a)$
\begin{align*}
   \tilde{\eta}_{x,a,r_{0:t-1}} \coloneqq \text{sign}(w_{x,a,r_{0:t-1}}) \cdot \left(\textrm{b}_{G_{0:t-1},\gamma^t}\right)_{\#} \eta(x,a),
\end{align*}
with the signed function $\text{sign}(z):\mathbb{R}\rightarrow\mathbb{R}$. Then we can write
\begin{align*}
    \mathcal{A}_\lambda^{\pi,\mu}\eta(x,a) = \sum_{t=1}^{\infty} \sum_{x\in\mathcal{X}} \sum_{a\in\mathcal{A}} \sum_{r_{0:t-1}\in\mathcal{R}^t} |w_{x,a,r_{0:t-1}}| \tilde{\eta}_{x,a,r_{0:t-1}}.
\end{align*}
Finally, we have
\begin{align*}
   &\ell_p^p\left(\mathcal{A}_\lambda^{\pi,\mu}\eta_1(x,a), \mathcal{A}_\lambda^{\pi,\mu}\eta_2(x,a)\right) \\
   &=_{(a)} \ell_p^p\left(\sum_{t=1}^{\infty} \sum_{x\in\mathcal{X}} \sum_{a\in\mathcal{A}} \sum_{r_{0:t-1}\in\mathcal{R}^t} |w_{x,a,r_{0:t-1}}| \tilde{\eta}_{x,a,r_{0:t-1}}^{(1)},\sum_{t=1}^{\infty} \sum_{x\in\mathcal{X}} \sum_{a\in\mathcal{A}} \sum_{r_{0:t-1}\in\mathcal{R}^t} |w_{x,a,r_{0:t-1}}| \tilde{\eta}_{x,a,r_{0:t-1}}^{(2)}\right) \\
    &=_{(b)} \left(\sum_{x\in\mathcal{X}} \sum_{a\in\mathcal{A}} \sum_{r_{0:t-1}\in\mathcal{R}^t} |w_{x,a,r_{0:t-1}}|\right)^{p-1} \sum_{x\in\mathcal{X}} \sum_{a\in\mathcal{A}} \sum_{r_{0:t-1}\in\mathcal{R}^t} |w_{x,a,r_{0:t-1}}| \ell_p^p\left(\tilde{\eta}_{x,a,G_{0:t-1}}^{(1)},\tilde{\eta}_{x,a,G_{0:t-1}}^{(2)}\right) \\
    &\leq_{(c)} \left(\sum_{t=1}^{\infty}\sum_{x\in\mathcal{X}} \sum_{a\in\mathcal{A}} \sum_{r_{0:t-1}\in\mathcal{R}^t} |w_{x,a,r_{0:t-1}}| \right)^{p-1} \sum_{x\in\mathcal{X}} \sum_{a\in\mathcal{A}} \sum_{r_{0:t-1}\in\mathcal{R}^t} |w_{x,a,r_{0:t-1}}| \ell_p^p\left(\tilde{\eta}_{x,a,r_{0:t-1}^{(1)}},\tilde{\eta}_{x,a,r_{0:t-1}}^{(2)}\right) \\
    &\leq_{(d)} \left(\sum_{t=1}^{\infty}\sum_{x\in\mathcal{X}} \sum_{a\in\mathcal{A}} \sum_{r_{0:t-1}\in\mathcal{R}^t} |w_{x,a,r_{0:t-1}}| \right)^{p-1} \sum_{t=1}^{\infty}\sum_{x\in\mathcal{X}} \sum_{a\in\mathcal{A}} \sum_{r_{0:t-1}\in\mathcal{R}^t} |w_{x,a,r_{0:t-1}}| \gamma^t\bar{\ell}_p^p\left(\tilde{\eta}_1,\tilde{\eta}_2\right) \\
    &=_{(e)} \left(\sum_{t=1}^{\infty}\sum_{x\in\mathcal{X}} \sum_{a\in\mathcal{A}} \sum_{r_{0:t-1}\in\mathcal{R}^t} |w_{x,a,r_{0:t-1}}| \right)^{p-1} \sum_{t=1}^{\infty}\sum_{x\in\mathcal{X}} \sum_{a\in\mathcal{A}} \sum_{r_{0:t-1}\in\mathcal{R}^t} |w_{x,a,r_{0:t-1}}| \gamma^t\bar{\ell}_p^p\left(\eta_1,\eta_2\right).
\end{align*}
In the above, (a) follows from the definition of $\tilde{\eta}$; (b) follows from the scaling property of the $\ell_p$ distance; (c) follows from the convex property of the $\ell_p$ distance, see Bellemare et al. \citep{bdr2022}; (d) follows from the definition of the supremum distance $\bar{\ell}_p$; (e) follows from the fact that $\bar{\ell}_p(\eta_1,\eta_2)=\bar{L}(\tilde{\eta}_1,\tilde{\eta}_2)$. Now, we examine the sum over coefficients $|w_{x,a,r_{0:t-1}}|$. For any fixed time step $t\geq 1$,
\begin{align*}
    \sum_{x\in\mathcal{X}} \sum_{a\in\mathcal{A}} \sum_{r_{0:t-1}\in\mathcal{R}^t} |w_{x,a,r_{0:t-1}}|=\lambda^{t-1}(1-\lambda+\lambda\epsilon).
\end{align*}
Hence the total sum is 
\begin{align*}
    \sum_{t=1}^{\infty}\sum_{x\in\mathcal{X}} \sum_{a\in\mathcal{A}} \sum_{r_{0:t-1}\in\mathcal{R}^t} |w_{x,a,r_{0:t-1}}| &= \frac{1-\lambda+\lambda\epsilon}{1-\lambda}\\
    \sum_{t=1}^{\infty}\sum_{x\in\mathcal{X}} \sum_{a\in\mathcal{A}} \sum_{r_{0:t-1}\in\mathcal{R}^t} |w_{x,a,r_{0:t-1}}| \bar{\ell}_p^p(\eta_1,\eta_2) &= \frac{\gamma(1-\lambda+\lambda\epsilon)}{1-\lambda\gamma} \bar{\ell}_p^p(\eta_1,\eta_2).
\end{align*}
By combing hte above quantities and taking the $1/p$-th root, we obtain the overall result
\begin{align*}
    &\ell_p\left(\mathcal{A}_\lambda^{\pi,\mu}\eta_1(x,a), \mathcal{A}_\lambda^{\pi,\mu}\eta_2(x,a)\right) \leq \beta_p(x,a) \bar{\ell}_p(\eta_1,\eta_2)
\end{align*}
with $\beta_p(x,a)=\gamma^{1/p}\frac{1-\lambda+\lambda\epsilon}{(1-\lambda)^{(p-1)/p}(1-\lambda\gamma)^{1/p}}$. We obtain the overall contraction rate by taking $\beta_p=\max_{x,a}\beta_p(x,a)$.
\end{proof}

\corollarycontraction*

\begin{proof}
By setting the condition on the contraction rate $\beta_1<1$, we obtain $\left\lVert \pi-\mu\right\rVert_1<\frac{1-\gamma}{\lambda\gamma}$. Since $\eta^\pi$ is a fixed point of $\mathcal{A}_\lambda^{\pi,\mu}$ by Lemma~\ref{lemma:qlambdafixedpoint}, when the operator is contractive $\beta_1<1$ under the $L_1$ distance, we also have $\eta^\pi$ as the unique fixed point.
\end{proof}

\lemmaprojectedqlambdacontraction*
\begin{proof}
Since $\mathcal{A}_\lambda^{\pi,\mu}$ is $\beta_2$-contractive under the $\bar{L}_2$ distance and the categorical projection $\Pi_c$ is non-expansive under the $\bar{L}_2$ distance \citep{bellemare2019distributional}, it follows that the composed operator $\Pi_c\mathcal{A}_\lambda^{\pi,\mu}$ is also $\beta_2$-contractive.
\end{proof}

\lemmaprojectedqlambdaerror*
\begin{proof}
The result follows from an application of Proposition 5.28 in Bellemare et al. \citep{bdr2022} to the off-policy distributional Q($\lambda$) case.

\end{proof}

\section{Alternative way to construct  distributional Q($\lambda$)}
\label{appendix:alternative}

The off-policy distributional Q($\lambda$) depends on the \emph{path dependent} multi-step TD error $\left(\textrm{b}_{G_{0:t-1},\gamma^{t}}\right)_{\#} \Delta_t^\pi$. Here, the path-dependency stems from the fact that this TD error depends on the path of reward $R_{0:t}$ \citep{tang2022nature} and is fundamentally different from value-based TD error $\delta_t^\pi$, which depends on the one-step transition $(X_t,A_t,R_t)$ only.

Nevertheless, we can build an alternative variant of distributional Q($\lambda$) by removing the path-dependency. The derivation might look heuristic: while the transformation $\left(\textrm{b}_{G_{0:t-1},\gamma^{t}}\right)_{\#} \Delta_t^\pi$ shrinks the width of the signed measure $\Delta_t^\pi$ by a factor $\gamma^t$, we can instead shrink its height by pulling the factor outside of the pushforward, leading to $\gamma^t (\textrm{b}_{G_{0:t-1},1})_{\#}\Delta_t^\pi$. This produces the operator
\begin{align*}
    \tilde{\mathcal{A}}_\lambda^{\pi,\mu}\eta(x,a) \coloneqq \eta(x,a) + \mathbb{E}_{\mu}\left[ \sum_{t=0}^\infty \textcolor{blue}{\gamma^t\lambda^t} \cdot \left(\textrm{b}_{G_{0:t-1},1}\right)_{\#} \Delta_t^\pi  \right].
\end{align*}
While the derivation is quite technical, we note that this  formulation can be understood as treating the discount factor $\gamma$ as a termination probability, rather than a scaling factor that defines the cumulative return. This is related to an alternative interpretation of the random return that recovers the same expectation as $\sum_{t=0}^\infty\gamma^tR_t$, see \citet{bdr2022} Chapter 2 for more discussions.

By design the operator also has $\eta^\pi$ as its fixed point. However, as with $\mathcal{A}_\lambda^{\pi,\mu}$, its contraction property depends on $\lambda$.

\begin{restatable}{lemma}{lemmacontractionalternative}\label{lemma:contractionalternative} (\textbf{Contraction of alternative operator}) The alternative operator satisfies the following property: for any $\eta_1,\eta_2\in\mathcal{M}_1(\mathbb{R})^{\mathcal{X}\times\mathcal{A}}$,
\begin{align*}
    \bar{\ell}_p\left( \tilde{\mathcal{A}}_\lambda^{\pi,\mu}\eta_1, \tilde{\mathcal{A}}_\lambda^{\pi,\mu}\eta_2\right) \leq \tilde{\beta} \bar{\ell}_p(\eta_1,\eta_2),
\end{align*}
with $\tilde{\beta}=\frac{\gamma(1+\lambda)}{1-\gamma\lambda}$. When $\lambda<\frac{1-\gamma}{2\gamma}$, we have $\tilde{\beta}<1$ and the operator is guaranteed to be contractive.
\end{restatable}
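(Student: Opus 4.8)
\textbf{Proof proposal for Lemma~\ref{lemma:contractionalternative}.}

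The plan is to mirror the structure of the proof of Lemma~\ref{lemma:qlambdacontraction}, but to track carefully how pulling the $\gamma^t$ factor \emph{outside} the pushforward changes the scaling behaviour. First I would expand the alternative operator into a linear combination of pushforwards of $\eta$ over future state-action pairs and reward paths, exactly as in the proof of Lemma~\ref{lemma:signed}: writing $\Delta_t^\pi = \mathcal{T}^\pi\eta(X_t,A_t) - \eta(X_t,A_t)$ and unrolling, one obtains
\begin{align*}
    \tilde{\mathcal{A}}_\lambda^{\pi,\mu}\eta(x,a) = \sum_{t=1}^\infty \sum_{y\in\mathcal{X}}\sum_{b\in\mathcal{A}}\sum_{r_{0:t-1}\in\mathscr{R}^t} \gamma^{t} w_{y,b,r_{0:t-1}} \left(\textrm{b}_{G_{0:t-1},1}\right)_{\#}\eta(y,b),
\end{align*}
where $w_{y,b,r_{0:t-1}} = \mathbb{E}_\mu[\lambda^{t-1}(\pi(b|X_t)-\lambda\mu(b|X_t))\mathbb{I}[X_t=y]\Pi_{s<t}\mathbb{I}[R_s=r_s]]$ is the same weight as before. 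The key structural difference from Lemma~\ref{lemma:qlambdacontraction} is that the pushforward $(\textrm{b}_{G_{0:t-1},1})_\#$ uses scaling factor $1$ rather than $\gamma^t$, so the $\ell_p$ distance between corresponding terms is \emph{not} shrunk by $\gamma^t$; the discount contribution now appears only as the explicit scalar multiplier $\gamma^t$ on each coefficient.

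Next I would apply the same chain of inequalities as in Lemma~\ref{lemma:qlambdacontraction}: triangle/convexity of $\ell_p$ (using $|\mathbb{E}[Z]|\le\mathbb{E}[|Z|]$ and the bound $\|\pi-\mu\|_1\le\epsilon$) to control $|w|$ in terms of $\lambda^{t-1}(1-\lambda+\lambda\epsilon)$, then the scaling/convexity properties of $\ell_p$ to pass the supremum distance $\bar\ell_p(\eta_1,\eta_2)$ outside. Because $(\textrm{b}_{G_{0:t-1},1})_\#$ is scale-$1$, each term contributes $\bar\ell_p(\eta_1,\eta_2)$ (no $\gamma$ factor inside), but each term carries the external factor $\gamma^t$. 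Summing the coefficients $\gamma^t \cdot \lambda^{t-1}(1-\lambda+\lambda\epsilon)$ over $t\ge 1$ gives, for the case $\epsilon = \|\pi-\mu\|_1$ bounded by its maximum $2$ (so $1-\lambda+\lambda\epsilon \le 1+\lambda$), a geometric series $\sum_{t\ge1}\gamma^t\lambda^{t-1} = \gamma/(1-\gamma\lambda)$, yielding a prefactor of $\gamma(1+\lambda)/(1-\gamma\lambda)$. Taking the $1/p$-th root is trivial here since — unlike the $\beta_p$ of Lemma~\ref{lemma:qlambdacontraction} — the bound is stated uniformly for all $p$, so I would use the crude worst-case estimate $\|\pi-\mu\|_1\le 2$ to collapse the $\epsilon$-dependence into the stated constant $\tilde\beta=\gamma(1+\lambda)/(1-\gamma\lambda)$, independent of $p$. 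Finally, solving $\tilde\beta<1$ for $\lambda$ gives $\gamma(1+\lambda)<1-\gamma\lambda$, i.e. $2\gamma\lambda < 1-\gamma$, i.e. $\lambda < (1-\gamma)/(2\gamma)$, which is the claimed contraction condition.

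The main obstacle I expect is the $p$-independence of the bound: in Lemma~\ref{lemma:qlambdacontraction} the exponents $(p-1)/p$ and $1/p$ distribute the "total mass" and "per-term distance" factors in a $p$-sensitive way, and that worked because the inner pushforward scaling $\gamma^{t/p}$ combined cleanly. Here, with inner scaling $1$, the Hölder-type step (raising the coefficient sum to the $(p-1)$ power and keeping one factor of $|w|$) still goes through, but I need to verify that the resulting bound, after the $1/p$-th root, is dominated by $\tilde\beta$ for \emph{every} $p\ge 1$ — in particular that the $p=1$ case (where the total coefficient mass $\sum_t\gamma^t\lambda^{t-1}(1+\lambda) = \tilde\beta$ appears directly) is the governing one and larger $p$ only helps. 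This amounts to checking that $\left(\sum|w_t|\gamma^t\right)^{(p-1)/p}\left(\sum|w_t|\gamma^t\right)^{1/p} = \sum|w_t|\gamma^t$ when the per-term $\ell_p$ distances are all equal to $\bar\ell_p(\eta_1,\eta_2)$, which is in fact an identity, so the bound is genuinely $p$-uniform and no delicate estimate is needed. The remaining routine work is the bookkeeping of the reward-path sums $\sum_{r_{0:t-1}}$ and confirming $\sum_{y,b,r_{0:t-1}}|w_{y,b,r_{0:t-1}}| = \lambda^{t-1}(1-\lambda+\lambda\epsilon)$ exactly as computed in Lemma~\ref{lemma:qlambdacontraction}.
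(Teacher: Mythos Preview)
Your decomposition is where the argument breaks. You claim that the alternative operator can be written as
\[
\tilde{\mathcal{A}}_\lambda^{\pi,\mu}\eta(x,a)=\sum_{t\ge 1}\sum_{y,b,r_{0:t-1}}\gamma^{t}\, w_{y,b,r_{0:t-1}}\,(\textrm{b}_{G_{0:t-1},1})_\#\eta(y,b)
\]
with the \emph{same} telescoped weights $w=\mathbb{E}_\mu[\lambda^{t-1}(\pi(b|X_t)-\lambda\mu(b|X_t))\cdots]$ as in Lemma~\ref{lemma:qlambdacontraction}. This is not true. The telescoping that produces those weights in the original operator hinges on the composition identity $(\textrm{b}_{G_{0:t-1},\gamma^t})_\#\circ(\textrm{b}_{R_t,\gamma})_\# = (\textrm{b}_{G_{0:t},\gamma^{t+1}})_\#$: the positive half of $\Delta_t^\pi$ (namely $\mathcal{T}^\pi\eta(X_t,A_t)$) pushes forward to exactly the same functional form as the negative half of $\Delta_{t+1}^\pi$, so the two collapse into the single coefficient $\lambda^t(\pi-\lambda\mu)$. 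In the alternative operator the outer pushforward has scale $1$, and
\[
(\textrm{b}_{G_{0:t-1},1})_\#\circ(\textrm{b}_{R_t,\gamma})_\# = (\textrm{b}_{G_{0:t-1}+R_t,\ \gamma})_\#,
\]
which carries the wrong shift ($G_{0:t-1}+R_t\neq G_{0:t}$) and the wrong scale ($\gamma\neq 1$) to match the next negative term $(\textrm{b}_{G_{0:t},1})_\#\eta(X_{t+1},A_{t+1})$. No cancellation occurs; the paper says this explicitly (``this operator cannot be written in a telescoping form'').

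The paper's route is instead to keep the two halves of each $\Delta_t^\pi$ separate and bound them individually: the $\mathcal{T}^\pi\eta$ half at step $t$ contributes weight $\gamma^t\lambda^t$ together with an internal scale-$\gamma$ pushforward (hence an extra factor $\gamma$), while the $-\eta$ half contributes weight $\gamma^t\lambda^t$ at scale $1$, with the $t=0$ negative term cancelling the leading $\eta(x,a)$. Summing yields $\sum_{t\ge 0}\gamma^t\lambda^t\cdot\gamma + \sum_{t\ge 1}\gamma^t\lambda^t = \gamma(1+\lambda)/(1-\gamma\lambda)$. You reach the same number, but only by the coincidence that plugging the crude bound $\epsilon\le 2$ into $\lambda^{t-1}(1-\lambda+\lambda\epsilon)$ gives $\lambda^{t-1}(1+\lambda)$; the decomposition that would justify invoking $\epsilon$ at all is not available here. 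A correct write-up should drop the telescoped weights entirely and run the non-telescoped bound directly --- which, incidentally, also explains why $\tilde\beta$ is independent of $\|\pi-\mu\|_1$: without the $(\pi-\lambda\mu)$ combination, the off-policy gap never enters the estimate.
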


\begin{proof}
The proof idea is similar to Lemma~\ref{lemma:qlambdacontraction}, where we seek to write the back-up target as a convex combination of signed measudres. Indeed, for any $\eta\in\mathcal{M}_1(\mathbb{R})^{\mathcal{X}\times\mathcal{A}}, $we can write
\begin{align*}
    \tilde{\mathcal{A}}_\lambda^{\pi,\mu}\eta(x,a) = \sum_{t=1}^{\infty} \sum_{x\in\mathcal{X}} \sum_{a\in\mathcal{A}} \sum_{r_{0:t-1}\in\mathcal{R}^t} \tilde{w}_{x,a,r_{0:t-1}} \tilde{\eta}_{x,a,r_{0:t-1}}.
\end{align*}
where $\tilde{w}$ can be negative, as before. The individual distribution writes from the pushforward operation that defines the operator
\begin{align*}
   \tilde{\eta}_{x,a,r_{0:t-1}} \coloneqq  \gamma^t \left(\textrm{b}_{G_{0:t-1},1}\right)_{\#} \eta(x,a).
\end{align*}
As a technical note, we see that unlike $\mathcal{A}_\lambda^{\pi,\mu}$, this operator cannot be written in a telescoping form. Hence we have the bound on the coefficient as
\begin{align*}
   \sum_{t=1}^\infty \sum_{x\in\mathcal{X}} \sum_{a\in\mathcal{A}} \sum_{r_{0:t-1}} |\tilde{w}_{x,a,r_{0:t-1}}| \leq \sum_{t=0}^\infty \gamma^t\lambda&t \cdot\gamma + \sum_{t=1}^\infty\gamma^t\lambda^t =\frac{\gamma(1+\lambda)}{1-\gamma\lambda}.
\end{align*}
This concludes the proof.

\end{proof}

We see that the upper bound on the trace coefficient $\lambda$ is $\frac{1-\gamma}{2\gamma}$ for the operator to be contractive, and strictly worse than the bound for the off-policy distributional Q($\lambda$) operator $\mathcal{A}_\lambda^{\pi,\mu}$ since $\left\lVert \pi-\mu\right\rVert_1\leq 2$. A direct implication is that $\tilde{A}_\lambda^{\pi,\mu}$ cannot benefit from multi-step learning even when on-policy, since the bound on $\lambda$ is constant: for $\gamma=0.99$, we have $\frac{1-\gamma}{2\gamma}\approx 0.01$, which is almost like one-step learning. The comparison suggests that $\mathcal{A}_\lambda^{\pi,\mu}$ is a much better construct of the multi-step learning operator.

\section{Optimal control} \label{appendix:optimal}

We can apply distributional Q($\lambda$) for optimal control. We use the notation $Q_{\eta_k}\in\mathbb{R}^{\mathcal{X}\times\mathcal{A}}$ to denote the Q-function induced by the signed return distribution $\eta_k$. At iteration $k$, we let the target policy to be the greedy policy with respect to $Q_{\eta_k}$, denoted as $\mathcal{G}\left(Q_{\eta_k}\right)$. Consider the recursion
\begin{align}
    \eta_{k+1} = \Pi_c \mathcal{A}^{\mathcal{G}\left(Q_{\eta_k}\right),\mu} \eta_k.\label{eq:optimal-control}
\end{align}
Under a few regularity conditions, we can guarantee that when $\lambda$ is small enough, the above recursion converges to a signed return distribution which closely approximates the return distribution $\eta^\ast\coloneqq \eta^{\pi^\ast}$ of the optimal policy $\eta^\ast$. 
\begin{restatable}{lemma}{lemmaoptimalqlambda}\label{lemma:optimalqlambda} (\textbf{Optimal control}) Assume the MDP has a unique deterministic optimal policy $\pi^\ast$. When $\lambda<\frac{1-\gamma}{2\gamma}$, we have $\eta_k\rightarrow\eta_\mathcal{A}^\ast\in\signedspace_1(\mathbb{R})^{\mathcal{X}\times\mathcal{A}}$ in $\bar{L}_2$ and 
\begin{align*}
    \bar{L}_2\left(\eta^\ast,\eta_\mathcal{A}^\ast\right) \leq \frac{\bar{L}_2\left(\eta^\ast,\Pi_c\eta^\ast\right)}{\sqrt{1-\gamma^2}}
\end{align*}
\end{restatable}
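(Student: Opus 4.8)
The plan is to mimic the contraction-plus-projection template already used for the policy-evaluation case (Lemmas~\ref{lemma:projectedqlambdacontraction} and~\ref{lemma:projectedqlambdaerror}), upgrading it to the control setting where the target policy changes at every iteration. The first step is to observe that the uniqueness of a deterministic optimal policy $\pi^\ast$ means there is a neighbourhood of $Q^\ast$ (in the sup-norm on $\mathbb{R}^{\mathcal{X}\times\mathcal{A}}$) on which the greedy operator $\mathcal{G}(\cdot)$ returns exactly $\pi^\ast$; more importantly, under the hypothesis $\lambda < \frac{1-\gamma}{2\gamma}$ the bound $\beta_1 < 1$ from Corollary~\ref{corollary:contraction} holds \emph{uniformly} over \emph{every} target policy $\pi$, because the worst-case off-policyness $\lVert \pi - \mu\rVert_1 \le 2$ still satisfies the contraction-radius condition $2 < \frac{1-\gamma}{\lambda\gamma}$. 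So for any sequence of target policies $(\pi_k)$, each operator $\mathcal{A}_\lambda^{\pi_k,\mu}$ is a $\beta$-contraction in $\bar{L}_2$ with a common modulus $\beta \le \frac{\gamma(1+\lambda)}{1-\gamma\lambda} < 1$ (this is the bound that, in the on-policy worst case $\epsilon=2$, reduces to the $\tilde\beta$ of Lemma~\ref{lemma:contractionalternative}), and $\Pi_c$ is non-expansive, so $\Pi_c \mathcal{A}_\lambda^{\pi_k,\mu}$ is a $\beta$-contraction.

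The second step is to establish convergence of the $Q$-function iterates $Q_{\eta_k}$, since that is what drives the changing target policy. One would argue that the induced value iteration $Q_{\eta_{k+1}}$ tracks a Q($\lambda$)-type control backup of $Q_{\eta_k}$; under the same radius condition this control operator is a sup-norm contraction with fixed point $Q^\ast$ (this is exactly the value-based off-policy Q($\lambda$) control result of \citet{harutyunyan2016q}, and $\eta\mapsto Q_\eta$ is linear and $1$-Lipschitz from $\bar\ell_2$ to the sup-norm on $Q$-values up to a constant depending on the atom spacing). Hence $Q_{\eta_k}\to Q^\ast$, and by the uniqueness of $\pi^\ast$ there is a finite $K$ such that $\mathcal{G}(Q_{\eta_k}) = \pi^\ast$ for all $k\ge K$. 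From iteration $K$ onward the recursion is simply $\eta_{k+1} = \Pi_c \mathcal{A}_\lambda^{\pi^\ast,\mu}\eta_k$, a fixed $\beta$-contraction, which converges to the unique fixed point $\eta_\mathcal{A}^\ast$ of $\Pi_c\mathcal{A}_\lambda^{\pi^\ast,\mu}$; combined with the pre-$K$ iterates being a finite prefix, $\eta_k \to \eta_\mathcal{A}^\ast$ in $\bar L_2$, which gives the first claim.

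The third step is the error bound. Here I would apply Lemma~\ref{lemma:projectedqlambdaerror} (equivalently Proposition~5.28 of \citet{bdr2022}) to the \emph{on-policy} operator $\mathcal{A}_\lambda^{\pi^\ast,\mu}$: since the target is now $\pi^\ast$ and $\eta^\ast = \eta^{\pi^\ast}$ is its true fixed point (Lemma~\ref{lemma:qlambdafixedpoint}), we get $\bar L_2(\eta^\ast, \eta_\mathcal{A}^\ast) \le \bar L_2(\eta^\ast, \Pi_c\eta^\ast)/\sqrt{1-\beta^2}$ where $\beta$ is the contraction modulus of $\mathcal{A}_\lambda^{\pi^\ast,\mu}$ in $\bar L_2$. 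The statement's denominator $\sqrt{1-\gamma^2}$ corresponds to taking $\beta = \gamma$; one justifies this by noting that when the target policy equals a \emph{deterministic} policy the relevant off-policyness term enters only through the $n$-step weights and, after the telescoping identity used in Lemma~\ref{lemma:qlambdacontraction}, the $\lambda$-dependent factors in $\beta_2$ can be absorbed so that the effective rate governing the geometric series of projection errors is $\gamma$ (alternatively: the approximation-error amplification is controlled by the one-step contraction factor $\gamma$ of $\mathcal{T}^{\pi^\ast}$ that underlies every term $\Delta_t^{\pi^\ast}$, not by the looser aggregate $\beta_2$). This last point — pinning the constant to $\gamma$ rather than to the more conservative $\beta_2$ — is the step I expect to require the most care, and it is where I would be most careful to check that the geometric-series argument of Proposition~5.28 of \citet{bdr2022} applies with the sharper ratio. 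The convergence-of-$Q_{\eta_k}$ step is conceptually routine given \citet{harutyunyan2016q} but does need the "regularity conditions" alluded to in the statement, namely that $\mu$ stays within the contraction radius of \emph{every} intermediate greedy target, which is automatic here since $\lambda < \frac{1-\gamma}{2\gamma}$.
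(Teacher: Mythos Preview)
Your proposal follows the same route as the paper's (sketch) proof: use the hypothesis $\lambda<\tfrac{1-\gamma}{2\gamma}$ together with $\lVert\pi-\mu\rVert_1\le 2$ to obtain a contraction that is uniform over all target policies, argue that the induced Q-function iterates $Q_{\eta_k}$ evolve as in value-based off-policy Q($\lambda$) control \citep{harutyunyan2016q} and hence converge to $Q^\ast$ so the greedy policy eventually freezes at $\pi^\ast$, and then apply the categorical-projection convergence/approximation-error argument of \citet{rowland2018analysis} to the fixed evaluation operator $\Pi_c\mathcal{A}_\lambda^{\pi^\ast,\mu}$. Your caution about the denominator being $\sqrt{1-\gamma^2}$ rather than $\sqrt{1-\beta_2^2}$ is warranted; the paper's sketch does not justify that specific constant either, it simply defers to the cited works.
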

\begin{proof}
The proof is a combination of the proof techniques applied in categorical distributional Q-learning  \citep{rowland2018analysis} and value-based Q($\lambda$) \citep{harutyunyan2016q}.

Let $Q_k$ be the Q-function induced by the iterate $\eta_k$, then we can argue that the evolution of $Q_k$ is as if the Q-function iterates are generated under the value-based Q($\lambda$) algorithm. Note that the condition on $\lambda<\frac{1-\gamma}{2\gamma}$ means that $\mathcal{A}_\lambda^{\pi,\mu}$ is contractive for any $\pi,\mu$. Indeed $\left\lVert \pi-\mu \right\rVert_1<2$ and hence for any $\pi,\mu$, the corresponding bound on the contraction rate is less than $1$. This means the Q-function iterate $Q_k$ will converge to the optimal Q-function $Q^\ast$ by the unique optimal policy $\pi^\ast$.

Then we follow an identical trace of argument from \citet{rowland2018analysis} to show the convergence of the signed measure iterate $\eta_k$, this includes proving the existence of a limiting signed measure $\eta_\mathcal{A}^\ast$ and its $\bar{L}_2$ distance to the optimal return distribution $\eta^\ast$.

\end{proof}

Since the above result is inherited from the policy evaluation case, it puts a fairly conservative restriction on $\lambda$. In practice, we find that using a larger value of $\lambda$ can also lead to stable learning in both tabular and large-scale settings (Section~\ref{sec:exp}).

In Figure~\ref{fig:qlambda_control}, we compare the speed of convergence by measuring the Cramer distance $\ell_2\left(\eta^\ast(x_0,a_0)),\eta_k(x_0,a_0)\right)$ at a fixed state action pair $(x_0,a_0)$ throughout iterations and across randomly generated MDPs. 
When $\lambda$ is properly chosen (in this case $\lambda=0.4$), off-policy Q($\lambda$) improves over baselines both in terms of the rate of convergence and the asymptotic accuracy, compatible with observations made in the off-policy evaluation case. However, a caveat is that the improvement of Q($\lambda$) comes at the cost of having to tune trace parameter $\lambda$ in practice: when $\lambda$ is too small, the learning is guaranteed to be stable but one does not benefit from multi-step learning ($\lambda=0$ recovers the one-step algorithm); when $\lambda$ is too large (e.g., $\lambda\approx 1$), the learning can become unstable as shown in the experiments. 

\begin{figure}[t]
    \centering
    \includegraphics[keepaspectratio,width=.4\textwidth]{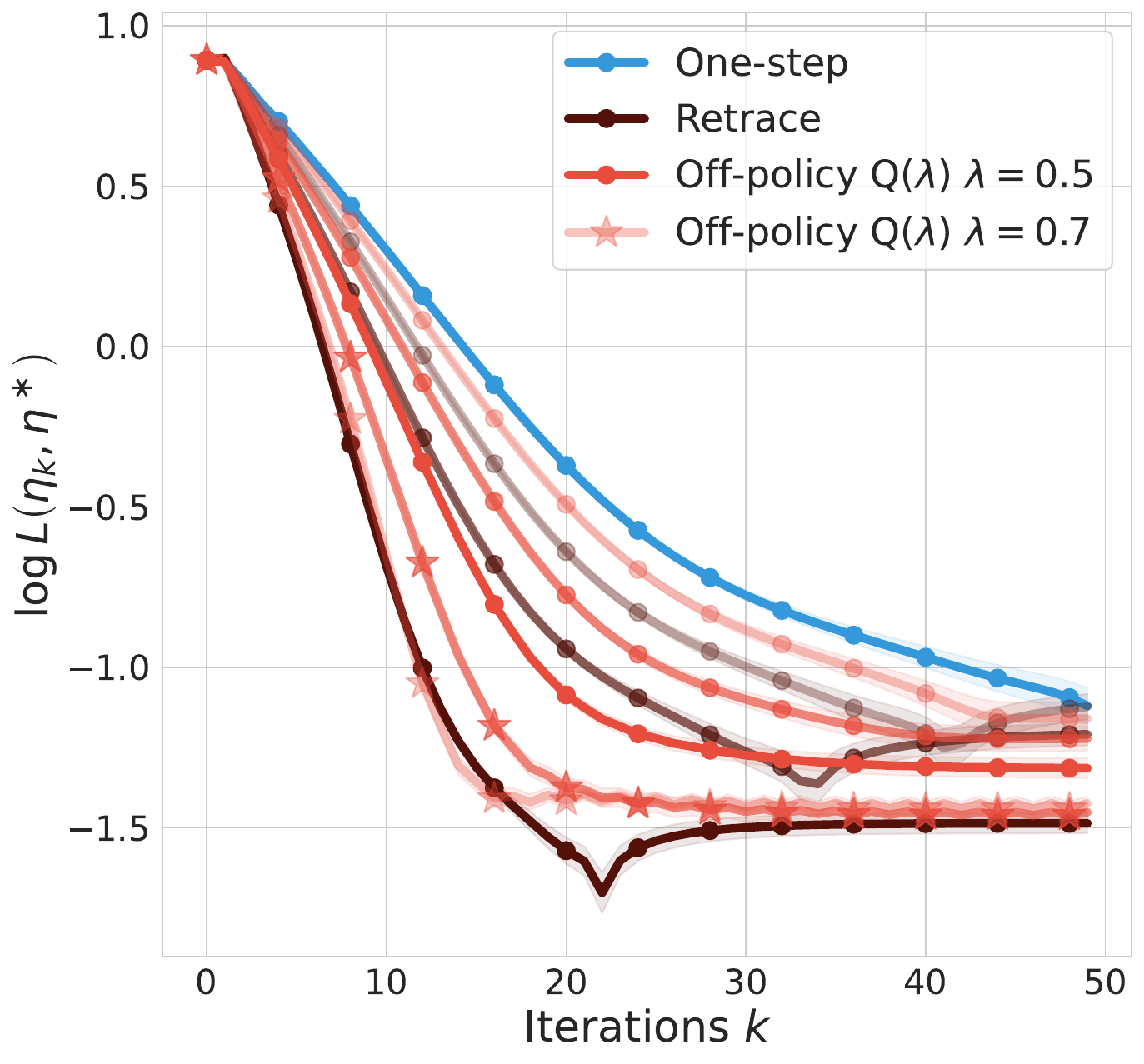}
    \caption{The distance between the algorithmic iterate $\eta_k$ and return distribution for the optimal policy $\eta^\ast$, as we run control algorithms with distributional one-step, Retrace and off-policy Q($\lambda$). All algorithms use categorical representations and set greedy policy as the target policy. Different curves show an algorithmic variant with a different hyper-parameter setting ($\bar{c}$ for Retrace and $\lambda$ for Q($\lambda$)). Unlike Figure~\ref{fig:qlambda_control} with $|\mathcal{A}|=20$, here with $|\mathcal{A}|=5$ all algorithmic behavior changes slightly. Since the problem effectively becomes less off-policy, Retrace can benefit from the full trace with $\bar{c}=4$, outperforming Q($\lambda$); meanwhile, Q($\lambda$) becomes more stable across all $\lambda$ values.}
    \label{fig:qlambda_control_5actions}
\end{figure}

\section{Experiments}
\label{appendix:exp}

We provide additional experimental details and results.

\subsection{Tabular MDP}

For the tabular MDP, the transition probability $p(\cdot|x,a)$ is randomly sampled from a Dirichlet distribution with $|\mathcal{X}|$ entries with a rate of $0.1$, i.e., $\text{Dirichlet}\left([0.1,0.1,...0.1]\right)$, for all $(x,a)$ independently. The reward function $r(x,a)$ is randomly sampled from $\mathcal{N}(0,1)$ and fixed as a deterministic reward for each $(x,a)$. The discount factor is set as $\gamma=0.9$. The behavior policy $\mu$ is uniform throughout, and target policy is always greedy with respect to the Q-function induced by the current distribution iterate. For all experiments, we generate the iterate as
\begin{align*}
    \eta_{k+1} = \Pi_c \mathcal{R}\eta_k
\end{align*}
where $\mathcal{R}$ is the distributional operator of interest. The initial iterate $\eta_0$ assigns uniform weights across all $m=10$ atoms. 
We calculate $\eta^\ast$ by first finding the optimal policy $\pi^\ast$ and then generate Monte-Carlo returns from $\pi^\ast$ for an empirical estimate $\eta^\ast$, followed by a projection onto the $m$ atoms.

We have set $|\mathcal{X}|=5$ throughout but vary $|\mathcal{A}=10|$ for ablations. For each algorithmic variant, we sweep over certain hyper-parameters, such as $\bar{c}\in\{1,2,4\}$ for Retrace and $\lambda\in\{0.1,0.3,0.5,0.7,0.9\}$ for off-policy Q($\lambda$).

\paragraph{The case with $|\mathcal{A}=5|$.} Figure~\ref{fig:qlambda_control_5actions} shows results for when $|\mathcal{A}=5|$ rather than $|\mathcal{A}=20|$ in Figure~\ref{fig:qlambda_control}. We see that since the number of action gets smaller, the problem has become effectively much less off-policy. As a result, Q($\lambda$) becomes stable for all levels of $\lambda$ that we sweep. There is also a general trend of improved contraction and fixed point error as $\lambda$ increases from $0.1$ to $0.9$. Retrace outperforms Q($\lambda$) when $\bar{c}=4$, since the algorithm can effectively make use of the full trace for distributional learning by IS.

\subsection{Deep RL with Atari}

We provide further details on the Atari experiments.

\paragraph{Evaluation.} For the $i$-th of the $57$ Atari games, we obtain the performance of the agent $G_i$ at any given point in training. The normalized performance is computed as 
$Z_i = (G_i - U_i) / (H_i - U_i)$ where $H_i$ is the human performance and $U_i$ is the performance of a random policy. The inter-quartile metric is calculated by dropping out tail samples from across all games and seeds \citep{agarwal2021deep}.

\paragraph{Shared settings for all algorithmic variants.} All algorithmic variants use the same torso architecture as DQN \citep{mnih2015humanlevel} and differ in the head outputs, which we specify below. All agents an Adam optimizer \citep{kingma2014adam} with a fixed learning rate; the optimization is carried out on mini-batches of size $32$ uniformly sampled from the replay buffer. For exploration, the agent acts $\epsilon$-greedy with respect to induced Q-functions, the details of which we specify below. The exploration policy adopts $\epsilon$ that starts with $\epsilon_\text{max}=1$ and linearly decays to $\epsilon_\text{min}=0.01$ over training. At evaluation time, the agent adopts $\epsilon=0.001$; the small exploration probability is to prevent the agent from getting stuck.

For C51, the agent head outputs a matrix of size $|\mathcal{A}| \times m$, which represents the logits to  $\left(p_i(x,a;\theta)\right)_{i=1}^m$. The support $(z_i)_{i=1}^m$ is generated as a uniform array over $[-V_\text{MAX},V_\text{MAX}]$. Though $V_\text{MAX}$ should in theory be determined by $R_\text{MAX}$; in practice, it has been found that setting $V_\text{MAX}=R_\text{MAX}/(1-\gamma)$ leads to highly sub-optimal performance. This is potentially because usually the random returns are far from the extreme values $R_\text{MAX}/(1-\gamma)$, and it is better to set $V_\text{MAX}$ at a smaller value. Here, we set $V_\text{MAX}=10$ and $m=51$.  For details of other hyperparameters, see \citep{bellemare2017distributional}.  The induced Q-function is computed as $Q_\theta(x,a)=\sum_{i=1}^m p_i(x,a;\theta)z_i$. 

\paragraph{Target and behavior policy.} Since the behavior policy $\mu$ is $\epsilon$-greedy, we have access to the probability distributions $\mu(a)$ for each action $a$, which gets stored in the replay buffer along with the transition. At learning time, the algorithms sample from the reply buffer and construct back-up targets. The stored probabilities $\mu(a)$ allow for IS techniques applied in distributional Retrace, when combined with a target policy $\pi$.

For Retrace and one-step, we set $\pi$ as the greedy policy with respect to the Q-function induced by the learner return distribution. For Q($\lambda$), with a fixed $\lambda$, we find that this works sub-optimally. One potential explanation is that the level of off-policyness changes throughout learning, and so a single $\lambda$ might not work optimally across the entire learning process. Instead, we borrow inspirations from the trust region literature, and set the target policy as a mixture of the greedy policy and behavior policy as in Eqn~\eqref{eq:mixing-target}. This allows for better performance for off-policy Q($\lambda$).

\begin{algorithm}[t]
\label{algo:c51-qlambda}
\begin{algorithmic}
\STATE Parameterized categorical distribution $\eta_\theta$ with main network parameter $\theta$ and target network parameter $\theta^-$
\FOR{$k=1,2...$}
\STATE Sample trajectory $(X_t,A_t,R_t)_{t=0}^\infty$ from the replay.
\STATE Compute gradient estimate $\widehat{g}_\theta$ based on Eqn~\eqref{eq:estimate-grad} for the sampled initial state-action pair $(X_0,A_0)$.
\STATE Update parameter $\theta\leftarrow\theta-\kappa \widehat{g}_\theta$.
\STATE Update target parameter $\theta^-\leftarrow(1-\tau)\theta^-+\tau\theta$.
\ENDFOR
\STATE  Output final distribution $\eta_\theta$.
\caption{Q($\lambda$)-C51}
\end{algorithmic}
\end{algorithm}

\end{appendix}

\end{document}